\def\eg{\emph{e.g. }}
\def\ie{\emph{i.e. }}
\def\cf{\emph{c.f. }}
\def\vs{\emph{vs. }}
\def\wrt{\emph{w.r.t. }}
\def\etal{\emph{et al. }}
\def \x {\mathbf{x}}
\def \h {\mathbf{h}}
\def \Q {\mathbf{Q}}
\newcommand*{\rom}[1]{\expandafter\@slowromancap\romannumeral #1@}
\newcommand\footnoteref[1]{\protected@xdef\@thefnmark{\ref{#1}}\@footnotemark}
\newtheorem{lemma}{Lemma}
\newtheorem{thm}{Theorem}
\newtheorem{cor}{Corollary}
\title{RNNs Evolving on an Equilibrium Manifold:\\ A Panacea for Vanishing and Exploding Gradients?}
\author{Anil Kag \\ 
Boston University \\ Boston, MA 02215\\
\texttt{anilkag@bu.edu}
	\And
	Ziming Zhang \\
	MERL\\
	Cambridge, MA 02139-1955 \\
	\texttt{zzhang@merl.com} \\
	 \And
	 Venkatesh Saligrama \\
	 Boston University \\ Boston, MA 02215\\
	 \texttt{srv@bu.edu} \\
}
\begin{document}
	
\maketitle
	
\begin{abstract}
Recurrent neural networks (RNNs) are particularly well-suited for modeling long-term dependencies in sequential data, but are notoriously hard to train because the error backpropagated in time either vanishes or explodes at an exponential rate. While a number of works attempt to mitigate this effect through gated recurrent units, well-chosen parametric constraints, and skip-connections, we develop a novel perspective that seeks to evolve the hidden state on the equilibrium manifold of an ordinary differential equation (ODE). We propose a family of novel RNNs, namely {\em Equilibriated Recurrent Neural Networks} (ERNNs) that overcome the gradient decay or explosion effect and lead to recurrent models that evolve on the equilibrium manifold. We show that equilibrium points are stable, leading to fast convergence of the discretized ODE to fixed points. Furthermore, ERNNs account for long-term dependencies, and can efficiently recall informative aspects of data from the distant past. We show that ERNNs achieve state-of-the-art accuracy on many challenging data sets with 3-10x speedups, 1.5-3x model size reduction, and with similar prediction cost relative to vanilla RNNs.


%
%
%
%
\end{abstract}

\section{Introduction}\label{sec:intr}
Recurrent neural networks (RNNs) store a hidden state vector $\h_{k} \in \mathbb{R}^D$ that evolves over time, $k \in \{1,2,\ldots, T\} \triangleq [T]$ starting from an initial state $\h_0=\hat{\h}$. The hidden state vector is updated upon receiving a sequential input signal $x = \{\x_k\} \subseteq \mathbb{R}^d$ using a non-linear activation function, $\phi$, that is applied component-wise to a linearly transformed hidden state and input vector.
\begin{equation} \label{eq.vanilla}  \mathbf{h}_k=\phi(\mathbf{U}\mathbf{h}_{k-1}+\mathbf{W}\mathbf{x}_k+\mathbf{b}), 
\end{equation} 
This seemingly simple update rule has had significant success, accounting for long-term dependencies in sequential inputs and time-series data resulting in complex decision rules. 
Nevertheless, training RNNs is notoriously difficult, as shown in the work of \cite{Bengio:1994:LLD:2325857.2328340} and \cite{pascanu2013difficulty}: the gradient of loss backpropagated through time suffers from exponential decay or explosion. Closer inspection reveals that the partial derivative of the empirical loss function with respect to parameters $\mathbf{U},\mathbf{W}$ in turn requires computation of $\frac{\partial \mathbf{h}_m}{\partial \mathbf{h}_n}$ for indices $m>n$, which is a product of $m-n$ terms: 
\begin{align}\label{eqn:exp}
\frac{\partial \mathbf{h}_m}{\partial \mathbf{h}_n} = \prod_{m\geq k>n}\frac{\partial \mathbf{h}_k}{\partial \mathbf{h}_{k-1}} = \prod_{m\geq k>n} \nabla\phi(\mathbf{U}\mathbf{h}_{k-1} + \mathbf{W}\mathbf{x}_{k} + \mathbf{b})\mathbf{U}.
\end{align}
where $\nabla\phi\in\mathbb{R}^{D\times D}$ denotes a diagonal matrix based on the (sub)gradient of $\phi$, which in the RNN setting has non-negative components (\eg sigmoid, $\tanh$, or ReLU). Barring pathological cases, the matrix product induces either an exponentially exploding or exponentially decaying value leading to poor error back-propagation during training. 


A number of works have been proposed to combat the gradient explosion and decay. 
Within the recurrent network literature, gated architectures, constrained weight matrices, and residual connections have been proposed. Different from these works, we draw inspiration from a recent line of research ~\cite{nde,chang2018antisymmetricrnn,tallec2018can}, which leverages insights from discretizing ordinary differential equations (ODEs) to propose new recurrent models for mitigating gradient explosion and decay. One of these insights is to view residual connections as an instantiation of the forward Euler discretization method: 
\begin{equation} \label{eq.ode_resid}
{d\h(t)\over dt} \triangleq 
\mathbf{h}'(t)= \phi(\mathbf{U}\mathbf{h}(t)+\mathbf{W}\mathbf{x}_k+\mathbf{b}) \stackrel{\mbox{Euler}}{\Longrightarrow} \mathbf{h}_k= \mathbf{h}_{k-1} + \eta \phi(\mathbf{U}\mathbf{h}_{k-1}+\mathbf{W}\mathbf{x}_k+\mathbf{b}). 
\end{equation}
Observe that with such residual connections (\ie the right term in Eq.~\ref{eq.ode_resid}), we can have $\frac{\partial \mathbf{h}_{k}}{\partial \mathbf{h}_{k-1}} = \mathbf{I} + \eta \nabla\phi(\mathbf{U}\mathbf{h}_{k-1}+\mathbf{W}\mathbf{x}_k+\mathbf{b})\mathbf{U}$. 
A {\it key insight} of \cite{kusupati2018nips} is that the chain of matrix products is {\it reasonably} well behaved so long as $\eta$ is chosen to be sufficiently small. Nevertheless, requiring $\eta$ small dilutes the contribution of the instantaneous signal component, which is somewhat undesirable. 

{\bf Contributions.} Motivated by these works, we develop new insights centered around the idea of discrete recursions that converge to the equilibrium solution of an ODE, which is much like Eq.~\ref{eq.ode_resid}. {\it Note that we respond to a new signal input, by updating the state-vector that corresponds to a new equilibrium point, rather than taking a direction pointed to by the vector-field.} This seemingly simple idea has significant implications leading to better control of state vectors, since the partial derivatives are now evaluated with respect to the feasible manifold directions. As a consequence, we find that for a relatively modest modification of the ODE, these partial derivatives neither vanish nor decay, thus justifying our approach. The proposed solution leads us to novel recurrent rules and smaller model size architectures, which can be efficiently trained. We refer to these models as {\em Equilibriated Recurrent Neural Networks} (ERNNs). Our ERNNs achieve state-of-the-art accuracy on many challenging data sets with 3-10x speedups, 1.5-3x model size reduction over the state-of-art, and similar prediction cost to vanilla RNNs.

\begin{wrapfigure}{r}{.47\linewidth}
\vspace{-30pt}
	\begin{center}
		\includegraphics[width=\linewidth]{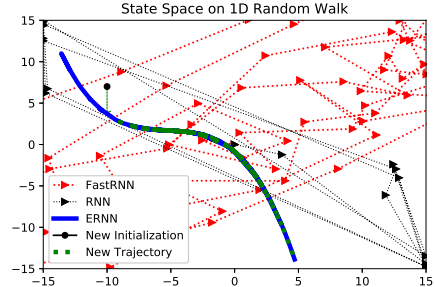}
		\vspace{-10mm}
		\caption{\footnotesize Phase-space trajectory of RNN, FastRNN, ERNN. X-axis denotes 1st dimension, and Y-axis 2nd dimension of 2D hidden state subject to random walk input with variance 10.}
		\label{fig:state_space_random_walk}
	\end{center}
	\vspace{-20pt}
\end{wrapfigure}
\textbf{Phase-Space Plot.} Fig.~\ref{fig:state_space_random_walk} depicts hidden state trajectory for ERNN, FastRNN, and RNN for a $1$D random walk input with variance 10 for $1000$ time steps, for a $2$D hidden space. We use $\tanh$ as the activation for all models and parameter matrices $\mathbf{U}, \mathbf{W}, \mathbf{b}$ are initialized randomly. For RNN, states are scaled since ERNN and FastRNN are not restricted to be in the cube. Evidently, RNN and FastRNN exhibit complex trajectories. Proposed ERNN first projects initial condition (black circle) onto the {\em equilibrium manifold} (blue) and moves within the depicted manifold (green).

\subsection{Overview of Proposed Solution.} \label{sec:overview}
{\bf Implicit Method.}
To build intuition, we revisit the ODE (left-term) in Eq.~\ref{eq.ode_resid}. An alternative approach to Euler discretization is the implicit method \cite{butcher2003numerical}, namely,
\begin{equation} \label{eq.ode_implicit}
\mathbf{h}(0)=\hat{\mathbf{h}}, \mathbf{h}'(t) = \phi(\mathbf{U}\mathbf{h}(t)+\mathbf{W}\mathbf{x}_k+\mathbf{b}) \stackrel{\mbox{Implicit}}{\Longrightarrow} \mathbf{h}_k= \mathbf{h}_{k-1} + \eta \phi(\mathbf{U}\mathbf{h}_{k}+\mathbf{W}\mathbf{x}_k+\mathbf{b}),\mathbf{h}_0=\hat{\mathbf{h}}.
\end{equation}
Notice that the main difference from before is that $\h_k$ appears on both sides of the discrete recursion. The state update is no longer computationally straightforward since it requires solving an implicit equation. While this is a drawback, we leverage existing fixed point recursion methods in this context. A known benefit of implicit methods is that they are more stable than their explicit counterparts~\cite{butcher2003numerical}.

{\bf Fixed Point Recursions.}
To solve the implicit equation in Eq. \ref{eq.ode_implicit}, we consider a standard fixed point recursion based again on Euler's method, which in essence is a root-finding technique. We find that our fixed point recursion converges rapidly (fewer than 5 steps in our experiments) and in theory, convergence follows from standard results for non-stiff systems~\cite{butcher2003numerical}. With $\mathbf{h}_k^{(0)}=\mathbf{h}_{k-1}$, we follow 
\begin{equation} \label{eq.fpr}
\mathbf{h}_k^{(i)} = \mathbf{h}_{k-1} + \eta \phi(\mathbf{U}\mathbf{h}_{k}^{(i-1)}+\mathbf{W}\mathbf{x}_k+\mathbf{b}) \stackrel{i \rightarrow \infty}{\implies} \mathbf{h}_k^{*} = \mathbf{h}_{k-1} + \eta \phi(\mathbf{U}\mathbf{h}_{k}^{*}+\mathbf{W}\mathbf{x}_k+\mathbf{b}).
\end{equation}
A key observation here is that the implicit method results in a new update rule, $\kappa:\mathbb{R}^D \times \mathbb{R}^d \rightarrow \mathbb{R}^D$, taking a tuple $(\mathbf{h},\mathbf{x})\in \mathbb{R}^D \times \mathbb{R}^d$ and mapping it to a point $\mathbf{h}^*\in \mathbb{R}^D$, which is a fixed point of the recursion, and this fixed point is an equilibrium solution of the ODE. So its characteristics are no longer governed by Eq.~\ref{eq.vanilla} or Eq.~\ref{eq.ode_resid}. 

{\bf Partial Derivatives on the Equilibrium Manifold.}
Since the hidden state lies on the equilibrium set, we are motivated to examine its properties. We have $\mathbf{h}'(t)=\phi(\mathbf{U}\mathbf{h}(t)+\mathbf{W}\mathbf{x}_k +\mathbf{b}),\,\, \mathbf{h}(0)=\mathbf{h}_{k-1}$. Since the initial condition is itself a parameter, we reparameterize the ODE and rewrite it as $\tilde{ \mathbf{h} }(t) = \mathbf{h}(t)-\mathbf{h}_{k-1}$. With this choice we have $\tilde{ \mathbf{h}}'(t)=\phi(\mathbf{U}(\tilde{ \mathbf{h}}(t)+\mathbf{h}_{k-1})+\mathbf{W}\mathbf{x}_k +\mathbf{b}),\,\, \tilde{ \mathbf{h}}(0)=\mathbf{0}$. The equilibrium solutions are defined by the set:
$$
{\cal M} = \{(\mathbf{h}_{k-1},\mathbf{x}_k,\mathbf{h}^*)\in \mathbb{R}^D \times \mathbb{R}^d \times \mathbb{R}^D \mid \phi(\mathbf{U}(\mathbf{h}^*+\mathbf{h}_{k-1})+\mathbf{W}\mathbf{x}_k +\mathbf{b})=\mathbf{0}\}.
$$
As such solution constrains at most $D$ variables and we could equivalently view the manifold as a collection of equilibrium points, $\mathbf{h}^*$, over all admissible tuples $((\mathbf{h}_{k-1},\mathbf{x}_k)$. As such, we do not enforce constraints on admissible input sequences. Under general smoothness conditions on the function $\phi$, and that $\nabla \phi$ does not vanish at any point in its domain (which follows for sigmoid and $\tanh$ activations), ${\cal M}$ defines a differentiable manifold, a direct consequence of the implicit function theorem~\cite{spivak1970comprehensive}. Furthermore, as a result\footnote{\url{http://cosweb1.fau.edu/~jmirelesjames/ODE_course/lectureNotes_shortVersion_day1.pdf}}, we can again invoke the implicit function theorem, leveraging the fact that the equilibrium solution varies smoothly as a function of the parameters $\mathbf{h}_{k-1}$ and $\mathbf{x}_k$. This allows us to express the partial derivatives of $\tilde{ \mathbf{h}}$ with respect to $\mathbf{h}_{k-1}$. It follows that $\nabla \phi(\mathbf{U}(\tilde{ \mathbf{h}}(t)+\mathbf{h}_{k-1})+\mathbf{W}\mathbf{x}_k +\mathbf{b})\mathbf{U}(\frac{\partial \tilde{ \mathbf{h}}}{\partial \mathbf{h}_{k-1}}+\mathbf{I})=\mathbf{0}$. Consequently, unless $\mathbf{U}$ is singular (we already assumed $\nabla \phi$ does not vanish on ${\cal M}$), we get $\frac{\partial\tilde {\mathbf{h}}}{\partial\mathbf{h}_{k-1}}= -\mathbf{I}$. While this suggests that we have circumvented the gradient problem, but alas, our solution is for $\tilde{ \mathbf{h}}(t) = \mathbf{h}(t)-\mathbf{h}_{k-1}$, and so translating back to $\h$ we see that $\frac{\partial\mathbf{h}}{\partial\mathbf{h}_{k-1}}=\mathbf{0}$. 

{\bf Circumventing Gradient Explosion and Decay: Norm Preserving Activation.}
Nevertheless, what is interesting in the above argument is that the trajectory that evolves along the equilibrium points in continuous time or equivalently the fixed points of the corresponding recursion, has a markedly different behavior from vanilla RNNs or heretofore considered variants. Indeed, it follows that by suitably modifying our activation function, we can guarantee a unitary transformation of the desired partial derivatives. Specifically, consider the ODE: $$\mathbf{h}'(t)=\phi(\mathbf{U}(\mathbf{h}(t)+\mathbf{h}_{k-1})+\mathbf{W}\mathbf{x}_k+\mathbf{b})-(\mathbf{h}(t)+\mathbf{h}_{k-1}),\,\,\mathbf{h}(0)=\mathbf{0}.$$
On the corresponding equilibrium manifold (assuming we again satisfy the required conditions) we have  $\frac{\partial\mathbf{h}}{\partial\mathbf{h}_{k-1}}=-\mathbf{I}$, for non-singular $\mathbf{U}$ and $\nabla \phi(\cdot)$. It follows that the fixed point recursion, such as Eq.~\ref{eq.fpr}, associated with this variant has desirable properties.  

{\bf Summary.}
Based on this intuition, we propose an efficient implementation for fixed point recursion. Our state updates converge at a linear rate to the fixed point (and hence to the equilibrium solution of the ODE). In theory, we observe that for suitable choice of parameters, equilibrium points are locally stable resulting in rapid convergence. In addition, whenever the input is slowly varying, the corresponding state updates are smooth, and as such lie on a low dimensional manifold. Furthermore, ERNNs account for long-term dependencies, and can efficiently recall informative aspects of data from the distant past. Our framework is general, and applies for arbitrary time steps, as well as for deep multi-layer blocks. Indeed, we could replace the activation function $\phi(\cdot)$ for an arbitrary transition function.  

\if0
To understand this let us compute the partial derivative as before. One issue is that the initial condition is now a variable. We can perform variable transformation to decouple the initial condition. It follows that an equivalent expression is given by:
$$
h_k^0=0,\,\,\,h_k^{[i]} = h_{k-1} + \eta \phi(U(h_{k}^{[i-1]}+h_{k-1})+Wx_k+b),\,\, i= 1,\,2,\ldots, 
$$
Note that these fixed points approximate the equilibrium solution of $h'(t)=\phi(U(h(t)+h_{k-1})+Wx_k + b)$, which in continuous time turns out to be $\phi(U(h_{eq}+h_{k-1})+Wx_k + b)=0$. Under fairly general conditions on the function $\phi(\cdot)$, we can invoke the implicit function theorem and obtain a functional mapping $\alpha_{eq}$ mapping the tuple $(x_t,h_{k=1})$ and the equilibrium solution (we assume a unique equilibrium) $h_{eq}$. Note that ${\partial h_{eq} \over \partial h_{k-1}}=-I$ unless $\nabla \phi$ or $U$ is singular. On the other hand, for the discrete recursion, we have 
In other words the map $\alpha(h,x)$

%
%
The gradient term turns out to be $$\frac{\partial h_{k}^*}{\partial h_{k-1}} = (I-\eta U \nabla \phi(\cdot))^{-1}(I+\eta U \nabla \phi(\cdot))$. 
This expression is interesting new insights centered around the idea of fixed points of discrete recursions and equilibrium solution of ODEs. The proposed solution leads us to novel recurrent rules and smaller model size architectures, which can be efficiently trained. Our proposed approach achieves state-of-the art accuracy on many challenging datasets with 3-5X speedups and 3X model size reduction.
\fi 

\if0
To build intuition into our let us first focus on the vanilla RNN (Eq.~\ref{eq.vanilla}). Suppose, we rewrite this equation as one that (hopefully) leads to a fixed-point, and we update the internal state on fixed points:
$$
h_{t}^\tau = f(h_{t}^{\tau-1},x_t),\,\,h_t^0 = h_{t-1},\,\, \longrightarrow \,\, h_t^{\infty}=\lim_{\tau \rightarrow \infty} h_t^\tau =  f(\lim_{\tau \rightarrow \infty} h_{t}^{\tau-1},x_t) = f( h_{t}^{\infty},x_t)
$$
Note that, this means that we start with a fixed point $h_{t-1}^{\infty}$, receive an input, $x_t$, and update the state, $h_t^{\infty}$. Viewed in this way, we still have an update rule, but a different one, namely, a function $\phi$ that updates the internal state,  $h_t^{\infty}=\phi(h_{t-1}^{\infty},x_t)$. A basic question, that arises is whether exponential decay or explosion manifest in such updates in the same fashion. Interestingly, it turns out that even for naive settings, the characterization of \cite{pascanu2013difficulty} is no longer valid. 
\fi


\subsection{Related Work}
\textbf{Gated Architectures.}
Long short-term memory (LSTM) \cite{hochreiter1997long} is widely used in RNNs to model long-term dependency in sequential data. Gated recurrent unit (GRU) \cite{cho2014properties} is another gating mechanism that has been demonstrated to achieve similar performance of LSTM with fewer parameters. Some recent gated architectures include  UGRNN \cite{2016arXivUGRNN}, and FastGRNN \cite{kusupati2018nips}. Although these models try to mitigate the vanishing/exploding gradients, they do not circumvent it. Moreover these models usually lead to the increase in training cost, inference cost and model size.

\textbf{Unitary RNNs.} 
This is another family of RNNs \cite{arjovsky2016unitary, jing2017tunable, 2018SpectralRNN, 2017oRNN} that consist of well-conditioned state transition matrices. They achieve this via bookkeeping the transition matrix to a nearly unitary matrix, which limits their expressive power and prediction accuracy while
increasing training time. 

\textbf{Deep State Transition RNNs.} Deep transition and multiscale RNNs are proposed to introduce nonlinear transition functions into RNNs for performance improvement. For instance, \cite{pascanu2013construct} empirically analyzed the problem of how to construct deep RNNs. \cite{zilly2017recurrent} proposed extending the LSTM architecture to allow step-to-step transition depths larger than one. \cite{mujika2017fast} proposed incorporating the strengths of both multiscale RNNs and deep transition RNNs to learn complex transition functions from one timestep to the next. These methods add complexity to the model via additional parameters, such extensions are readily possible for ERNNs as well.

\textbf{Feedforward Nets \& Residual Connections.} 
Apart from the well-known issues of vanishing or exploding gradients, several authors~\cite{Oord2016WaveNetAG,Gehring2017ConvolutionalST,Vaswani2017AttentionIA,Dauphin2017LanguageMW,miller2018recurrent} have observed that sequential processing leads to large training and inference costs because RNNs are inherently not parallelizable. To overcome these shortcomings they have proposed methods that replace recurrent models with parallelizable feedforward models which, while parallelizable, somewhat truncates the receptive field. Skip or residual connections ~\cite{Jaeger07,Bengio2013AdvancesIO,chang2017dilated,campos2017skip,KusupatiSBKJV18} feed-forward state vectors, and have been proposed as a middle ground between feed-forward and recurrent models.


\if0
For instance, these works view residual connections as an instantiation of the forward Euler discretization method: 
\begin{equation} \label{eq.ode_resid2}
\mathbf{h}'(t)= \phi(\mathbf{U}\mathbf{h}(t)+\mathbf{W}\mathbf{x}_t+\mathbf{b}) \stackrel{\mbox{Euler}}{\Longrightarrow} \mathbf{h}_k= \mathbf{h}_{k-1} + \eta \phi(\mathbf{U}\mathbf{h}_{k-1}+\mathbf{W}\mathbf{x}_k+\mathbf{b}). 
\end{equation}
Observe that with such residual connections (right-term in Eq.~\ref{eq.ode_resid}), we have, $\frac{\partial \mathbf{h}_{k}}{\partial \mathbf{h}_{k-1}} = \mathbf{I} + \eta \nabla\phi(\cdot)\mathbf{U}$. 
A {\it key insight} of \cite{kusupati2018nips} is that the chain of matrix products is now {\it reasonably} well behaved so long as $\eta$ is chosen to be sufficiently small. Nevertheless, requiring $\eta$ small dilutes the contribution of the instantaneous signal component, which is somewhat undesirable. 
\fi 

\textbf{ODE/Dynamical Perspective.} 
We focus on methods that are inspired by ODEs, since they are somewhat related to our method. Nevertheless, at a high-level, we are the first to propose fixed point recursions drawing upon evolution over the equilibrium manifold. Like us, there are others that have sought to address training stability of RNNs from the perspective of dynamical systems. \cite{talathi2015improving} proposed a modified weight initialization strategy based on a simple dynamical system perspective on weight initialization process that leads to successfully training RNNs composed of ReLUs. \cite{niu2019recurrent} analyzed RNN architectures using numerical methods of ODE and propose a family of ODE-RNNs. 

Our work is closely related to Chang \etal \cite{chang2018antisymmetricrnn}, who proposed Antisymmetric-RNN based on expressing the $\mathbf{U}$ matrix in Eq.~\ref{eq.ode_resid} as a difference of two matrices that are transposes of each other, \ie $\mathbf{U}=\mathbf{V}-\mathbf{V}^T$. In continuous time, the spectrum of $\mathbf{U}$ defined in this way has primarly imaginary eigenvalues. Nevertheless, as they point out the corresponding forward Euler discretization is unstable. Their solution is a damping factor to stabilize the recursion. In particular, they proposed to damp the dynamics by a damping factor $\gamma>0$:  
$\mathbf{h}_k=\mathbf{h}_{k-1} + \eta \phi((\mathbf{V}-\mathbf{V}^T-\gamma \mathbf{I})\mathbf{h}_{k-1}+\mathbf{W}\mathbf{x}_k+\mathbf{b}).$  
The impact of this proposal is quite similar to our discussion on residual connections in the context of Eq.~\ref{eq.ode_resid}. It follows that, $\frac{\partial \mathbf{h}_{k}}{\partial \mathbf{h}_{k-1}} = \mathbf{I} + \eta \nabla\phi(\cdot) (\mathbf{V}-\mathbf{V}^T-\gamma \mathbf{I})^T$, and the eigenvalues of the partial derivative are strictly smaller than one, leading to mitigating gradient decay but not circumventing it.

\section{Equilibriated Recurrent Neural Networks (ERNN)}\label{sec:ERNN}
\label{ssec:problem_definition}

Following the motivation in Sec.~\ref{sec:overview}, we present a general ODE model to expose the key property required for circumventing gradient decay/explosion. Our eventual choice is conventional.   
\if0
\begin{align}\label{eqn:ode}
\frac{\mbox{d}\mathbf{h}_k(t)}{\mbox{d}t} = F(\mathbf{h}_k(t)) \stackrel{def}{=} f\left(\mathbf{h}_k(t) + \mathbf{h}_{k-1}, \mathbf{x}_k; \alpha\right) - \Q\left(\mathbf{h}_k(t) + \mathbf{h}_{k-1}\right), \,\,\, h_k(0)=\mathbf{0}.
\end{align}
where $\mathbf{h}_k(t), \mathbf{h}_{k-1}\in\mathbb{R}^D, \forall t$, $\mathbf{Q} \in \mathbb{R}^D \times \mathbb{R}^D$ is some matrix, and $f:\mathbb{R}^D\times\mathcal{A}\rightarrow\mathbb{R}^{D}$ denotes a properly continuous and differentiable function parametrized by $\alpha$. As a matter of convention we use the letter $t$ to represent continuous time, letter $k$ to represent state and inputs at discrete time; and letter $i$ to represent rounds of our fixed point recursion. Like we described in the introduction, a straightforward way to think about this is to assume that we start with the previous state $\h_{k-1}$ and we obtain a new input $x_k$. We then run the ODE and obtain the solution $h_k(t)$ until convergence, hopefully to equilibrium. We then update the state vector, i.e., $\h_k \triangleq \h_k(\infty)$.  \todo{sync all the boldface vs. not boldface notation}
\fi 
\begin{align}\label{eqn:ode}
\mathbf{h}'_k(t) = F(\mathbf{h}_k(t)) \stackrel{def}{=} f\left(\mathbf{h}_k(t) + \mathbf{h}_{k-1}, \mathbf{x}_k; \alpha\right) - \gamma\left(\mathbf{h}_k(t) + \mathbf{h}_{k-1}\right), \,\,\, \h_k(0)=\mathbf{0}.
\end{align}
where $\mathbf{h}_k(t), \mathbf{h}_{k-1}\in\mathbb{R}^D, \forall t$, $\gamma \in \mathbb{R}$ is some multiplier, and $f:\mathbb{R}^D\times\mathbb{R}^d\times\mathcal{A}\rightarrow\mathbb{R}^{D}$ denotes a proper continuous and differentiable function parameterized by $\alpha\in\mathcal{A}$. An important difference between prior works and this is that we now have two indices $k$, which we will eventually associate with discrete time, and $t$ that denotes continuous time dynamics. Like we described in the introduction, a straightforward way to think about this is to assume that we start with the previous state $\h_{k-1}$ and we obtain a new input $\x_k$. We then run the ODE and obtain the solution $\h_k(t)$ until convergence, hopefully to equilibrium. We then update the state vector, i.e., $\h_k \triangleq \h_k(\infty)$.  
We learn weight parameters of the ODE by solving an empirical risk minimization problem, that constrains the state to lie on the equilibrium manifold of the ODE, while starting with $\h_0=\hat{\mathbf{h}}$:
\begin{align}\label{eqn:ernn}
\min_{\gamma,\alpha, \omega}  \sum_{(x,y)\sim\mathcal{X}\times\mathcal{Y}}\ell(\mathbf{h}_{T}, y; \omega), \; \mbox{s.t.} \; f(\mathbf{h}_k + \mathbf{h}_{k-1}, \mathbf{x}_k; \alpha) - \gamma(\h_{k}+\h_{k-1})=\mathbf{0},\,\, k\in[T].
\end{align}
Note that constraints are implicit, and we defer discussion of a solution until later. 
%
To further build intuition, we present a few properties of our framework.
For future convenience let us define the equilibrium manifold. These are points $\h_{eq}$ that satisfy the equilibrium condition, for fixed $(\h_{k-1},\x_k)$
\begin{equation} \label{eq.projeq}
\h_{eq} \in {\cal M}(\h_{k-1},\x_k)=\{\h \in \mathbb{R}^D \mid f\left(\h + \h_{k-1}, \x_k; \alpha\right) - \gamma \left(\h + \h_{k-1}\right)=\mathbf{0}\}.    
\end{equation}
\begin{lemma}[Identity Transition Mapping]\label{lem:IdTM}
	\if0
	Let $\nabla f\in\mathbb{R}^{D\times D}$ denote the gradients of function $f$ in Eq.~\ref{eqn:ode}. Suppose that matrix $\nabla f(\mathbf{h}_eq + \mathbf{h}_{k-1}, \x_k; \alpha) - \gamma$ is non-singular, 
	then at the equilibrium points we have
	\fi 
	Suppose $\nabla f(\mathbf{h}_{eq} + \mathbf{h}_{k-1}, \x_k; \alpha) - \gamma \mathbf{I}$ is non-singular, it follows that,
	$   \frac{\partial\mathbf{h}_{eq}}{\partial\mathbf{h}_{k-1}} = -\mathbf{I}.$
	where $\mathbf{I}\in\mathbb{R}^{D\times D}$ denotes the identity matrix. 
\end{lemma}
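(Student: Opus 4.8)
The plan is to treat the equilibrium condition in Eq.~\ref{eq.projeq} as an implicit equation that defines $\mathbf{h}_{eq}$ as a function of $\mathbf{h}_{k-1}$ (with $\mathbf{x}_k$, $\alpha$, $\gamma$ held fixed), and then differentiate via the implicit function theorem. Concretely, I would define
\[
G(\mathbf{h}_{eq}, \mathbf{h}_{k-1}) \triangleq f(\mathbf{h}_{eq} + \mathbf{h}_{k-1}, \mathbf{x}_k; \alpha) - \gamma(\mathbf{h}_{eq} + \mathbf{h}_{k-1}),
\]
so that the equilibrium manifold is exactly the zero set $G = \mathbf{0}$. The entire claim then reduces to computing the two relevant Jacobians of $G$ and plugging them into the implicit function theorem formula.

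First I would verify the hypotheses of the implicit function theorem: $f$ is assumed continuously differentiable, so $G$ is $C^1$, and the Jacobian of $G$ with respect to $\mathbf{h}_{eq}$ must be non-singular at the point of interest. Writing $s = \mathbf{h}_{eq} + \mathbf{h}_{k-1}$ and applying the chain rule (with $\partial s / \partial \mathbf{h}_{eq} = \mathbf{I}$ since $\mathbf{h}_{k-1}$ is held fixed), this Jacobian is $\partial G / \partial \mathbf{h}_{eq} = \nabla f(\mathbf{h}_{eq} + \mathbf{h}_{k-1}, \mathbf{x}_k; \alpha) - \gamma \mathbf{I}$, which is precisely the matrix assumed non-singular in the statement. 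Hence the theorem applies and $\mathbf{h}_{eq}$ is locally a differentiable function of $\mathbf{h}_{k-1}$.

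The key step — and the one that makes the result collapse to $-\mathbf{I}$ — is that $G$ depends on $\mathbf{h}_{eq}$ and $\mathbf{h}_{k-1}$ only through their sum $s = \mathbf{h}_{eq} + \mathbf{h}_{k-1}$. Consequently the two Jacobians coincide: differentiating symmetrically, $\partial G / \partial \mathbf{h}_{k-1} = \nabla f - \gamma \mathbf{I} = \partial G / \partial \mathbf{h}_{eq}$. The implicit function theorem then yields
\[
\frac{\partial \mathbf{h}_{eq}}{\partial \mathbf{h}_{k-1}} = -\left(\frac{\partial G}{\partial \mathbf{h}_{eq}}\right)^{-1}\frac{\partial G}{\partial \mathbf{h}_{k-1}} = -(\nabla f - \gamma \mathbf{I})^{-1}(\nabla f - \gamma \mathbf{I}) = -\mathbf{I},
\]
which completes the argument.

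The computation is short; the only thing demanding care is the shared-argument structure. I expect the main (and genuinely minor) obstacle to be the chain-rule bookkeeping: one must recognize that in $\partial G / \partial \mathbf{h}_{eq}$ the variable $\mathbf{h}_{k-1}$ is frozen while $\partial s / \partial \mathbf{h}_{eq} = \mathbf{I}$, and symmetrically for $\partial G / \partial \mathbf{h}_{k-1}$, so that both partials reduce to the identical matrix $\nabla f - \gamma \mathbf{I}$. Once this symmetry is established, the inverse and the matrix cancel and the $-\mathbf{I}$ falls out using nothing beyond the stated non-singularity hypothesis (which is what legitimizes both the applicability of the theorem and the cancellation).
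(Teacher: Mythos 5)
Your proposal is correct and takes essentially the same route as the paper: the paper differentiates the equilibrium condition $f(\mathbf{h}_{eq}+\mathbf{h}_{k-1},\mathbf{x}_k;\alpha)-\gamma(\mathbf{h}_{eq}+\mathbf{h}_{k-1})=\mathbf{0}$ totally with respect to $\mathbf{h}_{k-1}$ to obtain $[\nabla f-\gamma\mathbf{I}][\frac{\partial\mathbf{h}_{eq}}{\partial\mathbf{h}_{k-1}}+\mathbf{I}]=\mathbf{0}$ and then cancels the non-singular factor, which is exactly your implicit-function-theorem computation with the two coinciding Jacobians written out explicitly. Your version is slightly more careful about why the shared-argument structure makes $\partial G/\partial\mathbf{h}_{eq}=\partial G/\partial\mathbf{h}_{k-1}$, but the substance is identical.
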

\begin{proof}
	Note that by our assumptions of smoothness and non-singularity of the Jacobian of $F$, equilibrium solutions define a differentiable manifold much in the same way we discussed in Sec.~\ref{sec:intr}. By taking the partial derivatives \wrt $\mathbf{h}_{k-1}$ in Eq. \ref{eqn:ode}, at the equilibrium points we have $[\nabla f(\mathbf{h}_{eq} + \mathbf{h}_{k-1}; \alpha) - \gamma \mathbf{I} ][\frac{\partial\mathbf{h}_{eq}}{\partial\mathbf{h}_{k-1}} + \mathbf{I}] = \mathbf{0}$. The proof follows from our assumptions.
\end{proof}
We notice that replacing $\mathbf{h}_{k-1}$ with $-\mathbf{h}_{k-1}$ in Eq. \ref{eqn:ode} will lead to $\frac{\partial\mathbf{h}_{eq}}{\partial\mathbf{h}_{k-1}} = \mathbf{I}$, which also has no impact on magnitudes of gradients. As a result, both choices are suitable for circumventing vanishing or exploding gradients during training, but still may converge to different local minima and thus result in different test-time performance. Furthermore, notice that the norm preserving property is somewhat insensitive to choices of $\gamma$, so long as the non-singular condition is satisfied. 
\if0
In the sequel we build our theoretical analysis and training algorithm using Eq. \ref{eqn:ode} {\it w.l.o.g.}. Accordingly following the notations above, we define the family of our ERNNs as follows:
\begin{align}\label{eqn:ernn}
\min_{\alpha, \omega}  \sum_{(x,y)\sim\mathcal{X}\times\mathcal{Y}}\ell(\mathbf{h}_{T}, y; \omega), \; \mbox{s.t.} \; \mathbf{h}_k = f(\mathbf{h}_k + \mathbf{h}_{k-1}, \mathbf{x}_k; \alpha) - \mathbf{h}_{k-1}, \forall k\in[T].
\end{align}
\fi

\begin{thm}[Identity Transition Mapping in Backpropagation of ERNN]\label{thm:1}
	Suppose that the assumptions in Lemma \ref{lem:IdTM} holds. Then for our ERNNs in Eq. \ref{eqn:ernn} we have
	\begin{align}\label{eqn:identity_bp}
	\frac{\partial \mathbf{h}_m}{\partial \mathbf{h}_n} = \prod_{m\geq k>n}\frac{\partial \mathbf{h}_k}{\partial \mathbf{h}_{k-1}} = (-1)^{m-n}\mathbf{I} \Rightarrow \left\|\frac{\partial \mathbf{h}_m}{\partial \mathbf{h}_n}\right\| = 1.
	\end{align}
\end{thm}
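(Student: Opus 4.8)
The plan is to reduce the theorem entirely to Lemma~\ref{lem:IdTM} applied one step at a time, combined with the chain-rule factorization already displayed in the statement. The first step I would take is to observe that in the ERNN of Eq.~\ref{eqn:ernn} the equality constraint forces every hidden state $\mathbf{h}_k$ to satisfy the equilibrium condition of Eq.~\ref{eq.projeq} for the pair $(\mathbf{h}_{k-1},\mathbf{x}_k)$; that is, each $\mathbf{h}_k$ is exactly an equilibrium point $\mathbf{h}_{eq}$ of the ODE driven by the current input. Consequently Lemma~\ref{lem:IdTM} applies verbatim at every index $k\in[T]$, and since by hypothesis the non-singularity condition on $\nabla f(\mathbf{h}_{eq}+\mathbf{h}_{k-1},\mathbf{x}_k;\alpha)-\gamma\mathbf{I}$ holds at each step, we obtain the single-step identity $\frac{\partial \mathbf{h}_k}{\partial \mathbf{h}_{k-1}}=-\mathbf{I}$.

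Next I would substitute this single-step identity into the chain-rule product. The product runs over $k=n+1,\dots,m$, which is exactly $m-n$ factors, each equal to $-\mathbf{I}$. Because $\mathbf{I}$ commutes with every matrix and $(-\mathbf{I})^{j}=(-1)^{j}\mathbf{I}$, the product collapses to $(-\mathbf{I})^{m-n}=(-1)^{m-n}\mathbf{I}$. If a fully rigorous justification of the factor count is desired, a one-line induction on $m-n$ suffices: the base case $m=n$ gives the empty product $\mathbf{I}=(-1)^{0}\mathbf{I}$, and the inductive step multiplies the accumulated $(-1)^{m-1-n}\mathbf{I}$ by the next factor $-\mathbf{I}$.

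Finally I would read off the norm. For any submultiplicative matrix norm with $\|\mathbf{I}\|=1$, in particular the spectral norm, we have $\left\|(-1)^{m-n}\mathbf{I}\right\|=\lvert(-1)^{m-n}\rvert\,\|\mathbf{I}\|=1$, which is the asserted conclusion in Eq.~\ref{eqn:identity_bp}.

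As for the main obstacle: there is essentially none in the gradient computation itself, since all the mathematical content is front-loaded into Lemma~\ref{lem:IdTM}. The only point that warrants care is making the reduction explicit, namely arguing that the ERNN constraint genuinely places each $\mathbf{h}_k$ on the equilibrium manifold so that the per-step hypothesis of the lemma coincides with the non-singularity assumption invoked in the theorem. This is guaranteed by feasibility of Eq.~\ref{eqn:ernn}, but it is worth stating clearly, because the norm-preserving property hinges on the state being an \emph{exact} equilibrium at each step; any gap between the equilibrium and an approximate fixed-point iterate (\eg a finitely truncated recursion as in Eq.~\ref{eq.fpr}) would only yield the identity approximately.
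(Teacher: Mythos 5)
Your proposal is correct and follows essentially the same route as the paper, whose entire proof is the one-line remark that the result follows by invoking Lemma~\ref{lem:IdTM} in the chain-rule product of Eq.~\ref{eqn:identity_bp}; you simply make explicit the per-step application of the lemma, the collapse of the $m-n$ factors of $-\mathbf{I}$, and the norm computation. Your closing caveat about exact versus approximately computed equilibria is a reasonable observation but not part of the paper's argument.
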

\begin{proof}
	By invoking Lemma~\ref{lem:IdTM} into Eq. \ref{eqn:identity_bp}, the proof follows.
\end{proof}

\begin{wrapfigure}{r}{.5\linewidth}
	\vspace{-20pt}
	\begin{center}
		\includegraphics[width=\linewidth]{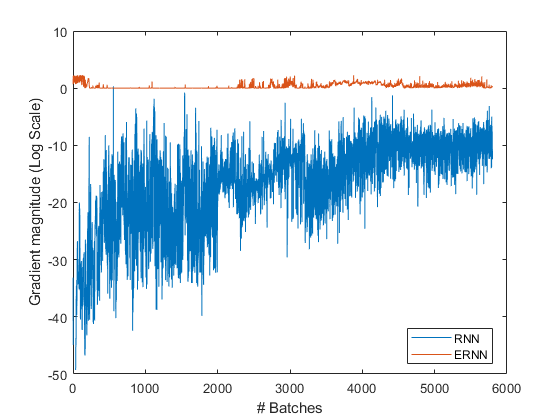}
		\vspace{-5mm}
		\caption{\footnotesize Comparison between RNN and ERNN on the magnitudes of gradients.}
		\label{fig:norm_grad}
	\end{center}
	\vspace{-15pt}
\end{wrapfigure}
To verify Theorem. \ref{thm:1} empirically, we train RNN and ERNN on the HAR-2 data set (see more details in Sec. \ref{sec:exp}), respectively, and plot in Fig.~\ref{fig:norm_grad} the magnitude of gradient of the last layer $\h_T$ \wrt the first layer $\h_1$ in log scale to confirm that our approach leads to no vanishing or exploding gradients when the error is back-propagated through time. 
We also conducted experiments to verify that the gradient of ERNN is norm preserving (see supplementary).
As we see clearly, RNN suffers from serious vanishing gradient issue in training, while ERNN's backpropagated gradients is close to 1, and the variance arises mainly our approximation of fixed points and stochastic behavior in training networks, 
demonstrating much better training stability of ERNN.

\noindent {\bf Multi-Layer RNN Blocks.} We point out in passing that our framework readily admits deep multi-layered networks within a single time-step. Indeed our setup is quite completely general; it applies to shallow and deep nets; small and large time steps. As a case in point, the Deep Transition RNN~\cite{deeptransitionRNN}: $$h_{k+1}=f_h(h_k,x_{k+1}) = \phi_h(W_L\phi_{L-1}(W_{L-1}\ldots W_1\phi_1(Uh_k+Wx_{k+1}))$$ is readily accounted by Theorem~1 in an implicit form: $$h_{k+1}=f_h(h_{k+1}+h_k,x_{k+1})-h_k.$$ So is Deep-RNN~\cite{trainingDeepRNN}. The trick is to transform $h_k \rightarrow h_k + h_{k+1}$ and $h_{k+1} \rightarrow h_{k}+h_{k+1}$. As such, all we need is smoothness of $f_h$, which has no restriction on \# layers. On the other hand, that we do not have to limit the number of time steps is the {\it point} of Theorem~1, which asserts that the partial differential of hidden states (which is primarily why vanishing/exploding gradient arises \cite{pascanu2013difficulty} in the first place) is identity!!

{\bf Uniform Asymptotic Stability on the Equilibrium Manifold.}
Next we point out that the equilibrium points, $h_{eq}$ in Eq.~\ref{eq.projeq}, are asymptotically stable for suitable choice of the activation function. 
\begin{lemma} \label{lem.stable}
Consider the dynamical system in Eq.~\ref{eqn:ode}. Suppose $\h_{eq}\in {\cal M}(\h_{k-1},\x_k)$, and the Jacobian of $F$ at $\h_{eq}$ has negative eigenvalues, then $h_{eq}$ is a locally asymptotically stable point (see \cite{vidyasagar}). 
\end{lemma}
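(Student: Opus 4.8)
The plan is to apply Lyapunov's indirect (linearization) method, the standard stability theorem in \cite{vidyasagar}. A preliminary observation is that the system in Eq.~\ref{eqn:ode} is \emph{autonomous}: its right-hand side $F$ depends on $t$ only through $\h_k(t)$, while $\h_{k-1}$ and $\x_k$ are held fixed during the continuous-time evolution. For autonomous systems local asymptotic stability is automatically uniform, which is precisely what justifies the ``uniform'' in the heading, so it suffices to establish local asymptotic stability of $\h_{eq}$.

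First I would linearize about the equilibrium. Writing $\mathbf{z}(t) = \h_k(t) - \h_{eq}$ and using $F(\h_{eq})=\mathbf{0}$ together with the differentiability of $f$, a first-order Taylor expansion gives $\mathbf{z}'(t) = \mathbf{A}\,\mathbf{z}(t) + g(\mathbf{z}(t))$, where $\mathbf{A} = \nabla F(\h_{eq}) = \nabla f(\h_{eq}+\h_{k-1},\x_k;\alpha) - \gamma\mathbf{I}$ is exactly the Jacobian appearing in Lemma~\ref{lem:IdTM}, and the remainder satisfies $g(\mathbf{z}) = o(\|\mathbf{z}\|)$ as $\mathbf{z}\to\mathbf{0}$.

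Next, I would read the hypothesis ``negative eigenvalues'' in its precise form, namely that $\mathbf{A}$ is Hurwitz (all eigenvalues have strictly negative real part); since $\mathbf{A}$ is generally non-symmetric this is the correct condition, and it also subsumes the non-singularity assumed in Lemma~\ref{lem:IdTM}. Because $\mathbf{A}$ is Hurwitz, for any symmetric positive definite $\mathbf{Q}$ the Lyapunov equation $\mathbf{A}^\top\mathbf{P} + \mathbf{P}\mathbf{A} = -\mathbf{Q}$ has a unique symmetric positive definite solution $\mathbf{P}$. Taking the positive definite candidate $V(\mathbf{z}) = \mathbf{z}^\top\mathbf{P}\mathbf{z}$, its derivative along trajectories is $\dot V = -\mathbf{z}^\top\mathbf{Q}\mathbf{z} + 2\,\mathbf{z}^\top\mathbf{P}\,g(\mathbf{z})$.

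The main obstacle is controlling the nonlinear cross term $2\,\mathbf{z}^\top\mathbf{P}\,g(\mathbf{z})$. Since $g(\mathbf{z}) = o(\|\mathbf{z}\|)$, for any $\epsilon>0$ there is a ball around the origin on which $\|g(\mathbf{z})\|\le\epsilon\|\mathbf{z}\|$, giving $\dot V \le -(\lambda_{\min}(\mathbf{Q}) - 2\epsilon\|\mathbf{P}\|)\|\mathbf{z}\|^2$. Choosing $\epsilon < \lambda_{\min}(\mathbf{Q})/(2\|\mathbf{P}\|)$ makes $\dot V$ negative definite on a punctured neighborhood of $\mathbf{z}=\mathbf{0}$. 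With $V$ positive definite and $\dot V$ negative definite locally, the Lyapunov stability theorem yields local, hence uniform, asymptotic stability of $\h_{eq}$. I expect the only genuine care to be in pinning down this neighborhood radius, which is routine given the smoothness already assumed for $f$.
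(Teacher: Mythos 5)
The paper offers no proof of this lemma at all---it simply defers to the standard linearization theorem in \cite{vidyasagar}---and your argument is exactly the textbook proof of that theorem: linearize at $\h_{eq}$, solve the Lyapunov equation $\mathbf{A}^\top\mathbf{P}+\mathbf{P}\mathbf{A}=-\mathbf{Q}$ for the Hurwitz Jacobian $\mathbf{A}=\nabla f(\h_{eq}+\h_{k-1},\x_k;\alpha)-\gamma\mathbf{I}$, and absorb the $o(\|\mathbf{z}\|)$ remainder into the negative-definite quadratic term. Your charitable reading of ``negative eigenvalues'' as ``eigenvalues with negative real part'' (the Jacobian is not symmetric in general) and your observation that autonomy of the flow upgrades local asymptotic stability to the uniform version claimed in the surrounding text are both correct, so the proposal is sound and consistent with what the paper intends.
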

To see that this property is generically satisfied, let $f(\h_k(t)+\h_{k-1},\mathbf{x}_k;\alpha)= \phi(\mathbf{U}(\h_k(t)+\h_{k-1})+\mathbf{W}\x_k+\mathbf{b})$ for some parameters $\mathbf{U} \in \mathbb{R}^{D\times D}, \mathbf{W} \in \mathbb{R}^{D\times d}$. It follows that we need matrix $\mathbf{M}=\nabla \phi(\cdot) \mathbf{U} - \gamma \mathbf{I}$ to have negative eigenvalues. Note that for conventional activation functions $\nabla \phi(\cdot)$ is a positive diagonal matrix with components bounded by one. Consequently, this condition suggests that we choose, $\gamma>0$ so that $\|\mathbf{U}\| \leq \gamma$. In our practical experiments, we did not attempt to learn $\gamma$ value but simply set it to one. 

{\bf Variation of Equilibrium \wrt Input.}
So far we have looked at various properties of equilibrium points \wrt the prior state. We consider its variation \wrt the input next. Again, let $f(\h_k(t)+\h_{k-1}, \x_k; \alpha)= \phi(\mathbf{U}(\h_k(t)+\h_{k-1})+\mathbf{W}\x_k+\mathbf{b})$ as above. Under the conditions of Lemma~\ref{lem:IdTM} and stability conditions in Lemma~\ref{lem.stable}, we are again in a position to locally express variations \wrt the input $\x_k$ at equilibrium. Let as before, $\h_{eq}$ be an equilibrium solution for some tuple $(\h_{k-1},\x_k)$. It follows that,
$$
(\gamma \mathbf{I}-\nabla \phi(\mathbf{U}(\h_k(t)+\h_{k-1})+\mathbf{W}\x_k+\mathbf{b})\mathbf{U})\partial \h_{eq} = \nabla \phi(\mathbf{U}(\h_k(t)+\h_{k-1})+\mathbf{W}\x_k+\mathbf{b}) \mathbf{W} \partial \x 
$$
This suggests that, whenever the input undergoes a slow variation, we expect that the equilibrium point moves in such a way that $\mathbf{U} \partial \h_{eq}$ must lie in a transformed span of $\mathbf{W}$. Now $\mathbf{W} \in \mathbb{R}^{D \times d}$ with $d \ll D$, which implies that $(\gamma \mathbf{I} -\nabla \phi(\mathbf{U}(\h_k(t)+\h_{k-1})+\mathbf{W}\x_k+\mathbf{b})\mathbf{U})$ is rank-deficient. 

{\bf Experimental Setup: Low Rank Matrix Parameterization}
For typical activation functions, note that whenever the argument is in the unsaturated regime $\nabla \phi(\cdot) \approx \mathbf{I}$. With this substitution, we approximately get $\mathrm{span}(\gamma \mathbf{I} - \mathbf{U}) \approx \mathrm{span}(\mathbf{W})$. We can express these constraints as $\mathbf{U}=\mathbf{I}+\mathbf{V}\mathbf{H}$ with low-rank matrices $\mathbf{V} \in \mathbb{R}^{D \times d_1}, \mathbf{H} \in \mathbb{R}^{d_1 \times D}$, and further project both $\mathbf{U}\h_k$ and $\mathbf{W}\x_k$ onto a shared space. Since in our experiments the signal vectors we encounter are low-dimensional, and sequential inputs vary slowly over time, we enforce this restriction in all our experiments. In particular, we consider,
%
%
\begin{equation} \label{eq.experiments}
\phi\left(\mathbf{P}[\mathbf{U}(\h_k(t)+\h_{k-1})+\mathbf{W}\x_k+\mathbf{b}]\right) - (\h_k(t) + \h_{k-1})=\mathbf{0}.
\end{equation}
The parameter matrix $\mathbf{P} \in \mathbb{R}^{D \times D}$ projects the contributions from input and hidden states onto the same space. To decrease model-size we let $\mathbf{P} =\mathbf{U}=(\mathbf{I}+\mathbf{V}\mathbf{H})$ learn these parameters. 

\subsection{The Euler Method for Solving fixed Points}
We consider the following standard update rule for solving fixed points \cite{butcher2003numerical}: 
\begin{align}\label{eqn:update_rule}
    \hspace{-2mm}\mathbf{h}_{k}^{(i+1)} & = \mathbf{h}_{k}^{(i)} + \eta_k^{(i)}F(\mathbf{h}_k^{(i)}) = \mathbf{h}_{k}^{(i)} + \eta_k^{(i)}[f(\mathbf{h}_k^{(i)} + \mathbf{h}_{k-1}, \mathbf{x}_k; \alpha) - (\mathbf{h}_k^{(i)} + \mathbf{h}_{k-1})],
\end{align}
where $\eta_k^{(i)}\in\mathbb{R}, \forall k, \forall i\in[K]$ denotes a small learnable constant. 

\begin{wrapfigure}{r}{.5\linewidth}
	\vspace{-5pt}
	\begin{center}
		\includegraphics[width=\linewidth]{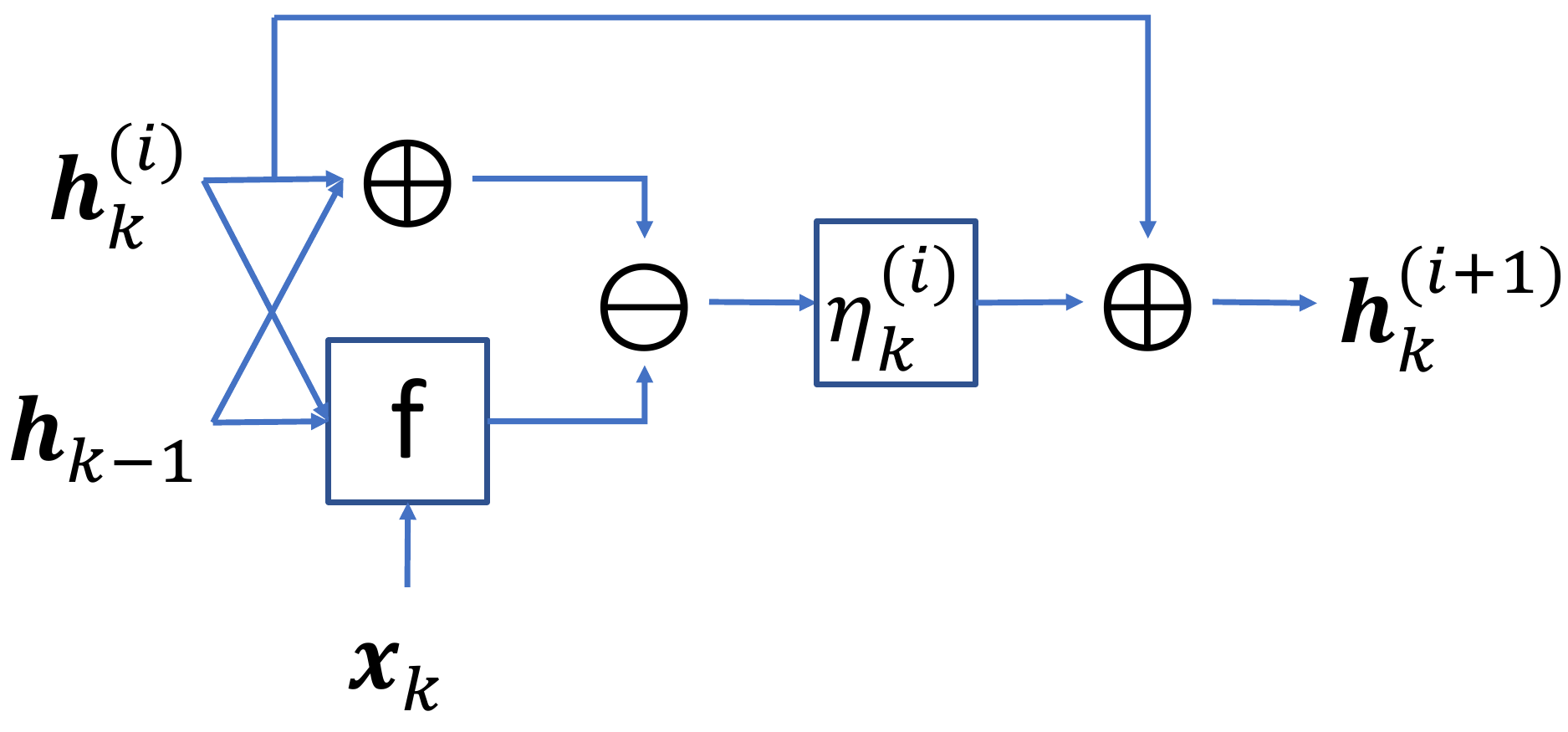}
		\vspace{-5mm}
		\caption{\footnotesize Illustration of network blocks for computing fixed points in ERNNs.}
		\label{fig:ERNN}
	\end{center}
	\vspace{-15pt}
\end{wrapfigure}
\textbf{Implementation:}
Such update rules can be efficiently implemented using networks, as illustrated in Fig. \ref{fig:ERNN}, where $\oplus, \ominus$ denote the entry-wise plus and minus operators, and each square denotes a learnable parameter or function with parameters of the network. Once the maximum number of iterations $K$ for solving fixed points is predefined, we can convert an ERNN into a feed-forward network using the blocks in Fig. \ref{fig:ERNN} to learn the RNN parameters simultaneously.  

\if0
{\bf ERNN \vs FastRNN \& AntisymmetricRNN:}
In fact FastRNN can be viewed as a special case of ERNN with $i=1$ (\cf Eq. 2 in \cite{kusupati2018nips}). Therefore, informally the bounds of our generalization error and convergence can be verified to be no larger than those of FastRNN using the same techniques in \cite{kusupati2018nips}. This claim is empirically validated in our experiments.

Although AntisymmetricRNN utilizes the Euler method for model update as well, the differences in ODE formulations distinguish ERNNs fundamentally. Besides, our analysis shows that if the stability condition of the Euler method in AntisymmetricRNN holds (\cf Prop. 2 in \cite{chang2018antisymmetricrnn}) the Euler method guarantees to converge locally with linear rate, while the other way does not hold. 
Note that AntisymmetricRNN solves the ODE with only one iteration at each timestep as well, \ie $K=1$.
\fi 


\begin{thm}[Local Convergence with Linear Rate]\label{thm:convergence}
Assume that the function $F$ in Eq. \ref{eqn:ode} and the parameter $\eta_k^{(i)}$ in Eq. \ref{eqn:update_rule} satisfies 
\begin{align}\label{eqn:euler}
    [\eta_k^{(i)}]^2 \|\nabla F(\mathbf{h}_k^{(i)})F(\mathbf{h}_k^{(i)})\|^2 + 2\eta_k^{(i)}F(\mathbf{h}_k^{(i)})\nabla F(\mathbf{h}_k^{(i)})F(\mathbf{h}_k^{(i)}) < 0, \forall k, \forall i.
\end{align}
Then there exists $\epsilon>0$ such that if $\|\mathbf{h}_k^{(0)} - \mathbf{h}_k\|\leq\epsilon$ where $\mathbf{h}_k$ denotes the fixed point, the sequence $\{\mathbf{h}_k^{(i)}\}$ generated by the Euler method converges to the equilibrium solution in ${\cal M}(\h_{k-1},\mathbf{x}_k)$ locally with linear rate.
\end{thm}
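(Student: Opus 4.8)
The plan is to treat $V(\mathbf{h}) \triangleq \tfrac{1}{2}\|F(\mathbf{h})\|^2$ as a Lyapunov (merit) function for the discrete Euler recursion in Eq.~\ref{eqn:update_rule}. Since the target fixed point $\mathbf{h}_k$ is by definition an equilibrium of Eq.~\ref{eqn:ode}, we have $F(\mathbf{h}_k)=\mathbf{0}$, so $V(\mathbf{h}_k)=0$ while $V$ is strictly positive on a punctured neighborhood, using the non-singularity of $\nabla F$ at $\mathbf{h}_k$ guaranteed by the hypotheses inherited from Lemma~\ref{lem:IdTM}. The key observation driving everything is that the left-hand side of the hypothesis Eq.~\ref{eqn:euler} is \emph{exactly} the leading-order change of $2V$ along one Euler step, so the assumption is precisely the statement that $V$ is a strict descent function for the recursion.

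First I would make this precise with Taylor's theorem. Because the update yields $\mathbf{h}_k^{(i+1)}-\mathbf{h}_k^{(i)} = \eta_k^{(i)} F(\mathbf{h}_k^{(i)})$, smoothness of $f$ (hence of $F$) gives
$$F(\mathbf{h}_k^{(i+1)}) = F(\mathbf{h}_k^{(i)}) + \eta_k^{(i)}\nabla F(\mathbf{h}_k^{(i)})F(\mathbf{h}_k^{(i)}) + R^{(i)}, \quad \|R^{(i)}\|\le \tfrac{L}{2}[\eta_k^{(i)}]^2\|F(\mathbf{h}_k^{(i)})\|^2,$$
where $L$ bounds the Lipschitz constant of $\nabla F$ on the neighborhood. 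Squaring and subtracting produces
$$2V(\mathbf{h}_k^{(i+1)}) - 2V(\mathbf{h}_k^{(i)}) = \underbrace{[\eta_k^{(i)}]^2\|\nabla F(\mathbf{h}_k^{(i)})F(\mathbf{h}_k^{(i)})\|^2 + 2\eta_k^{(i)} F(\mathbf{h}_k^{(i)})^\top\nabla F(\mathbf{h}_k^{(i)})F(\mathbf{h}_k^{(i)})}_{A^{(i)}} + \delta^{(i)},$$
where $A^{(i)}$ is exactly the left-hand side of Eq.~\ref{eqn:euler} (hence strictly negative by assumption) and $\delta^{(i)}$ collects the cross terms involving $R^{(i)}$. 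Since $F(\mathbf{h}_k)=\mathbf{0}$ implies $\|F(\mathbf{h}_k^{(i)})\| = O(\|\mathbf{h}_k^{(i)}-\mathbf{h}_k\|)$ near the fixed point, one gets $\delta^{(i)} = O(\|\mathbf{h}_k^{(i)}-\mathbf{h}_k\|^3)$, whereas $A^{(i)} = O(\|\mathbf{h}_k^{(i)}-\mathbf{h}_k\|^2)$.

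Next I would upgrade strict descent into a genuine geometric rate. Writing $\mathbf{e}^{(i)} \triangleq \mathbf{h}_k^{(i)}-\mathbf{h}_k$, the non-singularity of the Jacobian yields two-sided bounds $c_1\|\mathbf{e}^{(i)}\|^2 \le V(\mathbf{h}_k^{(i)}) \le c_2\|\mathbf{e}^{(i)}\|^2$, and the negative-eigenvalue structure supplied by Lemma~\ref{lem.stable} lets me lower-bound the negative term by a constant multiple of $-\|F(\mathbf{h}_k^{(i)})\|^2 = -2V(\mathbf{h}_k^{(i)})$, i.e. $A^{(i)}\le -c\, V(\mathbf{h}_k^{(i)})$ for some $c>0$. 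Choosing $\epsilon$ small enough that the cubic remainder $\delta^{(i)}$ is dominated by this quadratic descent whenever $\|\mathbf{e}^{(i)}\|\le\epsilon$, I obtain $V(\mathbf{h}_k^{(i+1)})\le \rho\, V(\mathbf{h}_k^{(i)})$ for a fixed $\rho\in(0,1)$. Because $V$ decreases, the iterate stays inside the $\epsilon$-ball (up to the factor $\sqrt{c_2/c_1}$), so the argument closes by induction. Unrolling gives $V(\mathbf{h}_k^{(i)})\le \rho^i V(\mathbf{h}_k^{(0)})$ and hence $\|\mathbf{e}^{(i)}\| \le \sqrt{c_2/c_1}\,\rho^{i/2}\|\mathbf{e}^{(0)}\|$, i.e. convergence to the equilibrium of ${\cal M}(\mathbf{h}_{k-1},\mathbf{x}_k)$ at a linear rate.

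The main obstacle I anticipate is precisely this upgrade: Eq.~\ref{eqn:euler} only asserts pointwise negativity of $A^{(i)}$, so I must show the descent is bounded away from zero \emph{proportionally to $V$ itself}, uniformly over the $\epsilon$-ball and over all iterates, to extract a single contraction factor $\rho<1$. This is where the non-singularity (Lemma~\ref{lem:IdTM}) and negative-spectrum/stability (Lemma~\ref{lem.stable}) assumptions do the real work, effectively controlling the linearized error map $\mathbf{e}^{(i+1)} \approx (\mathbf{I}+\eta_k^{(i)}\nabla F(\mathbf{h}_k))\mathbf{e}^{(i)}$, and where the remainder $\delta^{(i)}$ must be estimated carefully so it cannot erode the quadratic descent near the fixed point.
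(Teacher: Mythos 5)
Your proposal is correct in outline but takes a genuinely different route from the paper. The paper's proof is a two-line reduction: it identifies the Euler step $\mathbf{s}^{(i)}=\eta_k^{(i)}F(\mathbf{h}_k^{(i)})$ as an inexact Newton step with residual $\mathbf{r}^{(i)}=F(\mathbf{h}_k^{(i)})+\eta_k^{(i)}\nabla F(\mathbf{h}_k^{(i)})F(\mathbf{h}_k^{(i)})$, observes that squaring $\|\mathbf{r}^{(i)}\|<\|F(\mathbf{h}_k^{(i)})\|$ is exactly Eq.~\ref{eqn:euler}, and then invokes Thm.~2.3 of \cite{dembo1982inexact} to conclude local linear convergence. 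You instead re-derive the convergence from first principles by running a descent argument on the merit function $V=\tfrac12\|F\|^2$; your quantity $A^{(i)}$ is the same expansion of $\|\mathbf{r}^{(i)}\|^2-\|F^{(i)}\|^2$ that the paper obtains by squaring, so both proofs pivot on the identical algebraic observation. What the paper's route buys is that Dembo's theorem absorbs all the hard work (Taylor remainder control, equivalence of norms via non-singularity of $\nabla F$, and the induction keeping iterates in the ball); what your route buys is a self-contained argument that makes the mechanism transparent and does not need the weighted norm $\|\cdot\|_*$ of the cited theorem.

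One caveat you should be aware of, since you correctly flag the uniformization step as the crux: your proposed bound $A^{(i)}\le -c\,V(\mathbf{h}_k^{(i)})$ does not follow from Lemma~\ref{lem.stable} alone, because negative \emph{eigenvalues} of $\nabla F$ do not imply $F^\top\nabla F\,F\le -c\|F\|^2$ (that requires the symmetric part of $\nabla F$ to be negative definite). To extract a single contraction factor $\rho<1$ you need the strict inequality in Eq.~\ref{eqn:euler} to hold uniformly (bounded away from zero) on the whole $\epsilon$-ball, e.g.\ by continuity and compactness, not merely pointwise at the iterates. Note that the paper's own proof makes the analogous silent leap from the pointwise consequence $\|\mathbf{r}^{(i)}\|/\|F(\mathbf{h}_k^{(i)})\|<1$ to the uniform hypothesis $\le\tau<1$ required by Dembo's theorem, so this is a shared gap rather than a defect unique to your argument.
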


The proof is based on drawing a connection between the Euler method and inexact Newton methods, and leverages Thm. 2.3 in \cite{dembo1982inexact}. See our supplementary (for proof, empirical verification).  

\begin{cor}
If $\|\mathbf{I}+\eta_k^{(i)}\nabla F(\mathbf{h}_k^{(i)})\|<1, \forall k, \forall i$, the forward propagation (Eq. \ref{eqn:update_rule}) is stable and the sequence $\{\mathbf{h}_k^{(i)}\}$ converges locally at a linear rate.
\end{cor}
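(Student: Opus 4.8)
The plan is to derive this directly from Theorem~\ref{thm:convergence}, since the stated operator-norm hypothesis is a clean sufficient condition for the quadratic inequality~\eqref{eqn:euler} that Theorem~\ref{thm:convergence} requires. First I would establish the algebraic identity that rewrites the left-hand side of~\eqref{eqn:euler} as a difference of squared norms. Writing $F = F(\mathbf{h}_k^{(i)})$ and $\nabla F = \nabla F(\mathbf{h}_k^{(i)})$ for brevity, I would expand
$$\left\|(\mathbf{I} + \eta_k^{(i)}\nabla F)F\right\|^2 = \|F\|^2 + 2\eta_k^{(i)} F^\top \nabla F\, F + [\eta_k^{(i)}]^2\|\nabla F\, F\|^2,$$
using that the scalar $F^\top\nabla F^\top F$ equals its own transpose $F^\top\nabla F\,F$. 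Consequently the left-hand side of~\eqref{eqn:euler} is exactly $\left\|(\mathbf{I} + \eta_k^{(i)}\nabla F)F\right\|^2 - \|F\|^2$, so the descent condition of Theorem~\ref{thm:convergence} is precisely a statement that the linearized residual map shrinks $\|F\|$.

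With this identity in hand, the second step is immediate via submultiplicativity of the operator norm: whenever $F \neq \mathbf{0}$ (i.e.\ $\mathbf{h}_k^{(i)}$ is not already an equilibrium point),
$$\left\|(\mathbf{I}+\eta_k^{(i)}\nabla F)F\right\| \leq \left\|\mathbf{I} + \eta_k^{(i)}\nabla F\right\|\,\|F\| < \|F\|,$$
so the difference of squared norms is strictly negative and condition~\eqref{eqn:euler} holds for every $k$ and every $i$. Invoking Theorem~\ref{thm:convergence} then yields local convergence of $\{\mathbf{h}_k^{(i)}\}$ to the equilibrium solution in ${\cal M}(\h_{k-1},\mathbf{x}_k)$ at a linear rate.

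For the stability claim I would argue directly rather than through Theorem~\ref{thm:convergence}. The update~\eqref{eqn:update_rule} is the iterated map $G(\mathbf{h}) = \mathbf{h} + \eta_k^{(i)} F(\mathbf{h})$, whose Jacobian is precisely $\nabla G = \mathbf{I} + \eta_k^{(i)}\nabla F$. Since a fixed point $\mathbf{h}_k$ satisfies $F(\mathbf{h}_k)=\mathbf{0}$ and $\nabla G$ has operator norm strictly below one there, continuity of $\nabla F$ supplies a neighborhood of $\mathbf{h}_k$ on which $\|\nabla G\| \leq \rho < 1$; the mean-value inequality then shows that $G$ contracts errors, $\|\mathbf{h}_k^{(i+1)} - \mathbf{h}_k\| \leq \rho\,\|\mathbf{h}_k^{(i)} - \mathbf{h}_k\|$, which is exactly the assertion that forward propagation is locally stable and converges geometrically.

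The main obstacle is not the algebra, which is essentially a one-line completion of squares, but the bookkeeping around the varying step sizes $\eta_k^{(i)}$ and the fact that $\nabla F$ is evaluated at the running iterate rather than at the fixed point. I would handle this by observing that the corollary's hypothesis is assumed uniformly over all $k,i$, so the quadratic condition~\eqref{eqn:euler} inherits the same uniform validity term by term, and that the contraction argument needs the norm bound only on a small neighborhood of each fixed point, which follows from continuity of $\nabla F$ together with the strict inequality holding there. The only genuine care point is the degenerate case $F=\mathbf{0}$, where the strict inequality fails trivially because the iterate already sits on the equilibrium manifold.
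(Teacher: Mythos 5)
Your proposal is correct and follows essentially the same route as the paper: both arguments reduce to the single submultiplicativity estimate $\|(\mathbf{I}+\eta_k^{(i)}\nabla F)F\|\leq\|\mathbf{I}+\eta_k^{(i)}\nabla F\|\,\|F\|<\|F\|$, which you use to verify condition~\eqref{eqn:euler} of Theorem~\ref{thm:convergence} while the paper uses it to verify the inexact-Newton ratio bound of \cite{dembo1982inexact} directly---the two conditions being equivalent by exactly the completion-of-squares identity you write down. The only difference is that you prove the stability claim self-containedly via a contraction-mapping argument on $G(\mathbf{h})=\mathbf{h}+\eta_k^{(i)}F(\mathbf{h})$, where the paper simply cites Prop.~2 of \cite{chang2018antisymmetricrnn}; your handling of the degenerate case $F=\mathbf{0}$ is a detail the paper glosses over.
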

The proof is based on Thm. 2.3 in \cite{dembo1982inexact}, Thm. \ref{thm:convergence} and Prop. 2 in \cite{chang2018antisymmetricrnn}. Please refer to our supplementary.

\section{Experiments}\label{sec:exp}
%

\begin{wraptable}{r}{.5\linewidth}
 	\vspace{-40pt}
	\begin{minipage}[b]{\linewidth}
	    \caption{\footnotesize Dataset Statistics}
		\begin{center}\footnotesize
            \setlength{\tabcolsep}{1pt}
            \begin{tabular}{|ccccc|} 
                \hline
                Data set & \#Train & \#Fts & \#Steps & \#Test \\ 
                \hline\hline
                DSA-19 & 4,560 & 5,625 & 125 & 4,560 \\
                HAR-2 & 7,352 & 1,152 & 128 & 2,947 \\
                Google-12 & 22,246 & 3,168 & 99 & 3,081 \\ 
                Google-30 & 51,088 & 3,168 & 99 & 6,835 \\
                PTB-10000 & 929,589 & - & 300 & 82,430 \\
                Yelp-5 & 500,000 & 38,400 & 300 & 500,000 \\
                Pixel-MNIST-10 & 60,000 & 784 & 784 & 10,000 \\
                Permuted-MNIST & 60,000 & 784 & 784 & 10,000 \\
                Noisy-MNIST & 60,000 & 784 & 1000 & 10,000 \\
                Noisy-CIFAR & 60,000 & 3072 & 1000  & 10,000 \\
                \hline
            \end{tabular}
            \vspace{-1mm}
            \label{table:1}
		\end{center}
	\end{minipage}
\end{wraptable}

\textbf{Setup:}
Our experiments are geared towards showing the importance of vanishing and exploding gradients. In this light, we'll consider variety of datasets exhibiting long term dependence (LTD). LTD datasets highlight the importance of recalling information over a very large time window. Hence, if the method suffers from ill-defined gradients, it'll not be able to recover the required information. \cite{chang2018antisymmetricrnn} argues that a method's LTD response is best seen on those datasets where only a smaller unknown segment(s) of a longer sequence is informative. This precisely occurs in our datasets (table \ref{table:ltd-nums}); as seen activity is a short time-span within a longer sequence; start time is random, so method has to be agnostic. 
 In order to perform a meaningful comparative study, our experimental setup will conform to state-of-art methods that satisfy the following conditions:\\ (a) methods that are devoid of additional application or dataset-specific heuristics; \\(b) methods that leverage only single cell/block/layer, and \\ (c) methods that do not benefit from complementary add-ons (such as \cite{chang2018antisymmetricrnn}+gating, \cite{kusupati2018nips}+gating, advanced regularization etc.). 
 
 Requiring (a) is not controversial since our goal is methodological here. Conditions (b),(c) are justifiable since we could also leverage these add-ons and are not germane to any particular method. We will demonstrate that on LTD tasks, ERNN outperforms comparable methods when these experimental conditions are enforced. . 


\begin{wraptable}{r}{.5\linewidth}
	\vspace{-20pt}
	\begin{minipage}[b]{\linewidth}
	    \caption{\footnotesize Long Term Dependence in Datasets}
		\begin{center}\footnotesize
            \setlength{\tabcolsep}{.5pt}
            \begin{tabular}{|c c c c|} 
                \hline
                Dataset & \begin{tabular}[c]{@{}c@{}}Avg. Acitivity\\Time\end{tabular} & \begin{tabular}[c]{@{}c@{}}Input \\Time \end{tabular} & \begin{tabular}[c]{@{}c@{}}Sequence \\ Ratio\end{tabular} \\ [0.5ex] 
                \hline\hline
                Google-12; 30 & 25ms & 1000ms & 3/99 \\
                DSA-19 & 500ms & 5000ms & 13/125 \\
                HAR-2 & 256ms & 2560ms & 13/128 \\
                Yelp-5 & 20 & 300 & 1/15 \\
                \hline
            \end{tabular}
            \label{table:ltd-nums}
		\end{center}
	\end{minipage}
\vspace{-13pt}
\end{wraptable}


\textbf{Datasets:} 
ERNN's performance was benchmarked against datasets that other works have shown to require LTD. The most common ones for LTD tasks among different works are MNIST, permuted MNIST, and PTB. Moreover, most works \cite{chang2018antisymmetricrnn, cooijmans2016recurrent, zhang2018stabilizing}, unlike us and \cite{kusupati2018nips}, evaluate on at most 2-3 datasets. We chose one dataset from each sequential decision tasks category. So for Language Modeling: PTB over the easier Text8; for Classification: Google-12/30 over IMDB; for activity recognition: DSA-19, HAR-2 etc. 

We include traditional RNN tasks: (a) star rating prediction on a scale of 1 to 5 of Yelp reviews  \cite{Yelp2017}, (b) classification of MNIST images on a pixel-by-pixel sequence  (or a fixed random permuted sequence) \cite{Lecun98gradient-basedlearning}, and (c) language modeling on the Penn Treebank (PTB) data set \cite{McAuley2013PTB}. We also include activity recognition tasks: (a) Google-30 \cite{warden2018google30} and Google-12, \ie detection of utterances of 30 and 10 commands plus background noise and silence and (b) HAR-2 \cite{Anguita2012HAR} and DSA-19 \cite{Altun2010DSA19}, \ie Human Activity Recognition (HAR) from an accelerometer and gyroscope on a Samsung Galaxy S3 smartphone and Daily and Sports Activity (DSA) detection from a resource-constrained IoT wearable device with 5 Xsens MTx sensors having accelerometers, gyroscopes and magnetometers on the torso and four limbs. 

All the data sets are publicly available and their pre-processing and feature extraction details are provided in \cite{kusupati2018nips}. Our experiments use the same test/train split as \cite{kusupati2018nips} for all the data sets. We further utilized 20\% of training data for validation to tune hyperparameters, and then the algorithms were trained on the full training set and the results were reported on the publicly available test set. Table \ref{table:1} lists the statistics of all the data sets. 

{\bf Noise Padded Datasets}: Additionally, as in \cite{chang2018antisymmetricrnn}, we \textit{induce} LTD by padding CIFAR-$10$ with noise exactly replicating their setup. We do so on MNIST (above table). \textit{Intuition} (Ideally): Say, noise is padded at $t>\tau$ and resulting $Wx_t$ is close to mean value. For ERNN the resulting states ceases to be updated. So ERNN recalls last informative state $h_\tau$ (modulo const) unlike RNNs/variants!

\textbf{Baseline Algorithms and Implementation:} For all our experiments, we used the parametrized update formulation in Eq. \ref{eq.experiments} for ERNN. $K$ in the table denotes the number of fixed point recursion steps (higher $K$ approximates the equilibrium better).


\begin{wraptable}{r}{.58\linewidth}
 	\vspace{-25pt}
	\begin{minipage}[b]{\linewidth}
		\begin{center}\scriptsize 
            \setlength{\tabcolsep}{.5pt}
            \caption{\footnotesize Comparison on benchmark data sets: $K$ denotes pre-defined recursions embedded in graph to reach equillibrium.}
            \begin{tabular}{|ccccccc|} 
             \hline
             Data set & Algorithm & \begin{tabular}[c]{@{}c@{}}Accuracy\\ (\%)\end{tabular} & \begin{tabular}[c]{@{}c@{}}Model\\ Size (KB)\end{tabular} & \begin{tabular}[c]{@{}c@{}}Train\\ Time (hr)\end{tabular} & \begin{tabular}[c]{@{}c@{}}Test\\ Time (ms)\end{tabular} & \begin{tabular}[c]{@{}c@{}}\#Params\end{tabular}  \\ [0.5ex] 
             \hline\hline
             HAR-2 & FastRNN & 94.50 & 29 & 0.063 &  \textbf{0.01} & 7.5k \\ 
              & FastGRNN-LSQ & 95.38 & 29 & 0.081 &  0.03 & 7.5k \\ 
              & RNN & 91.31 & 29 & 0.114 & \textbf{0.01} & 7.5k  \\ 
              & SpectralRNN & 95.48 & 525 & 0.730 & 0.04 & 134k \\ 
              & LSTM & 93.65 & 74 & 0.183 &  0.04 & 16k\\ 
              & GRU & 93.62 & 71 & 0.130 & 0.02 & 16k \\ 
              & Antisymmetric & 93.15 & 29 & 0.087 & \textbf{0.01} & 7.5k \\ 
              & {\bf ERNN(K=1)} & 95.32 & \textbf{17} & 0.061 & \textbf{0.01} & \textbf{4k} \\
              & {\bf ERNN(K=3)} & 95.52 & \textbf{17} & 0.081 & 0.02 & \textbf{4k} \\
              & {\bf ERNN(K=5)} & \textbf{96.30} & 18 & \textbf{0.018} & 0.03 & \textbf{4k} \\
              \hline
             DSA-19 & FastRNN & 84.14 & 97 & 0.032 & \textbf{0.01} &  17.5k \\ 
              & FastGRNN-LSQ & 85.00 & 208 & 0.036 & 0.03 &  35k \\ 
              & RNN & 71.68 & 20 & 0.019 & \textbf{0.01} & \textbf{3.5k}  \\ 
              & SpectralRNN & 80.37 & 50 & 0.038 & 0.02  & 8.8k \\ 
              & LSTM & 84.84 & 526 & 0.043 & 0.06 & 92k  \\
              & GRU & 84.84 & 270 & 0.039 & 0.03 &  47k \\ 
              & Antisymmetric & 85.37 & 32 & 0.031 & \textbf{0.01} &  8.3k\\ 
              & {\bf ERNN(K=1)} & \textbf{88.11} & \textbf{19} & 0.015 & \textbf{0.01} & \textbf{3.5k} \\
              & {\bf ERNN(K=3)} & 85.20 & \textbf{19} & 0.020 & 0.02 &  \textbf{3.5k} \\
              & {\bf ERNN(K=5)} & 87.37 & 20 & \textbf{0.005} &  0.03 & \textbf{3.5k} \\
              \hline
             Google-12 & FastRNN & 92.21 & 56 & 0.61 & \textbf{0.01} & 12k   \\ 
              & FastGRNN-LSQ & 93.18 & 57 & 0.63 & 0.03 & 12k  \\ 
              & RNN & 73.25 & 56 & 1.11 & \textbf{0.01}  & 12k  \\ 
              & SpectralRNN & 91.59 & 228 & 19.0 & 0.05 &  49k \\ 
              & LSTM & 92.30 & 212 & 1.36 & 0.05 &  45k \\ 
              & GRU & 93.15 & 248 & 1.23 & 0.05 &  53k \\ 
              & Antisymmetric & 89.91 & 57 & 0.71 & \textbf{0.01} & 12k \\ 
              & {\bf ERNN(K=1)} & 93.93 & \textbf{36} & 0.20 & \textbf{0.01} &  \textbf{8.1k} \\
              & {\bf ERNN(K=3)} & 94.16 & 37 & 0.33 & 0.03 &  \textbf{8.1k} \\
              & {\bf ERNN(K=5)} & \textbf{94.71} & 38 & \textbf{0.17} & 0.05 & \textbf{8.1k}  \\
              \hline
             Google-30 & FastRNN & 91.60 & 96 & 1.30 & \textbf{0.01}  & 18k  \\ 
              & FastGRNN-LSQ & 92.03 & 45 & 1.41 & \textbf{0.01}  & \textbf{8.5k}  \\ 
              & RNN & 80.05 & 63 & 2.13 &  \textbf{0.01} &  12k \\ 
              & SpectralRNN & 88.73 & 128 & 11.0 & 0.03 &  24k  \\ 
              & LSTM & 90.31 & 219 & 2.63 & 0.05  &  41k \\ 
              & GRU & 91.41 & 257 & 2.70 & 0.05  &  48.5k \\ 
              & Antisymmetric & 90.91 & 64 & 0.54 & \textbf{0.01}  & 12k \\ 
              & {\bf ERNN(K=1)} & 93.77 & \textbf{44} & 0.44 & \textbf{0.01}  & \textbf{8.5k}  \\
              & {\bf ERNN(K=3)} & 91.30 & \textbf{44} & 0.44 & 0.03  & \textbf{8.5k}  \\
              & {\bf ERNN(K=5)} & \textbf{94.23} & 45 & \textbf{0.44} &  0.05  & \textbf{8.5k} \\
              \hline
             Pixel-MNIST & FastRNN & 96.44 & 166 & 15.10 & \textbf{0.05}  &  33k \\  
              & FastGRNN-LSQ & \textbf{98.72} & 71 & 12.57 &  0.06 &  14k \\ 
              & RNN & 94.10 & 71 & 45.56 & \textbf{0.05}  &   14k \\ 
              & LSTM & 97.81 & 265 & 26.57 & 0.1  & 53k  \\ 
              & GRU & 98.70 & 123 & 23.67 & 0.07  & 25k    \\ 
              & Antisymmetric & 98.01 & 70 & 8.61 & \textbf{0.05} & 14k \\ 
              & {\bf ERNN(K=1)} & 97.73 & \textbf{20} & \textbf{2.83} &  \textbf{0.05} &  \textbf{4k}   \\
              & {\bf ERNN(K=2)} & 98.13  & 23 &  3.11 & 0.06  &    \textbf{4k} \\
              & {\bf ERNN(K=3)} & 98.13 & 25 & 2.93 & 0.07 & \textbf{4k}    \\
              \hline
             Yelp-5 & FastRNN & 55.38 & 130 & 3.61 & \textbf{0.4}  &  32.5k \\  
              & FastGRNN-LSQ & {\bf 59.51} & 130 & 3.91 & 0.7  &  32.5k \\ 
              & RNN & 47.59 & 130 & 3.33 & \textbf{0.4}  & 32.5k \\ 
              & SpectralRNN & 56.56 & \textbf{89} & 4.92 &  0.3 &  \textbf{22k} \\ 
              & LSTM & 59.49 & 516 & 8.61 &  1.2 & 129k  \\ 
              & GRU & 59.02 & 388 & 8.12 & 0.8  &  97k \\ 
              & Antisymmetric & 54.14 & 130 & 2.61 & \textbf{0.4} & 32.5k \\ 
              & {\bf ERNN(K=1)} & 58.16 & 97.67 & \textbf{0.31} & \textbf{0.4}  & 25k \\
              & {\bf ERNN(K=2)} & 59.01 & 98.84 & \textbf{0.31} & 0.7  & 25k  \\
              & {\bf ERNN(K=3)} & 59.34  & 100 & 1.16 & 1.0  &  25k \\
             \hline
             Permute-MNIST & FastRNN & 92.68 & 44 & 9.32 & \textbf{0.03} & 8.75k \\ 
             & LSTM & 92.61 & 176 & 19.31 & 0.12 & 35k \\ 
             & Antisymmetric & 93.59 & 70 & 4.75 & \textbf{0.03} & 14k \\ 
             & ERNN(K=1) &  \textbf{95.62} &  \textbf{41} &  \textbf{2.41} & 0.04  &  \textbf{8k} \\ 
             \hline
             Noisy-MNIST & FastRNN & 98.12 & 58 & 8.93 &  \textbf{0.09} & 11k \\ 
             & LSTM & 10.31 & 232 & 19.43 & 0.36 & 44k \\ 
             & Antisymmetric & 97.76 & 54 & 5.21 &  \textbf{0.09} & 10k \\ 
             & ERNN(K=1) &  \textbf{98.48} &  \textbf{32} &  \textbf{2.39} &  \textbf{0.09} &  \textbf{6k} \\ 
             \hline
             Noisy-CIFAR & FastRNN & 45.76 & 81 & 11.61 & 0.19 & 16k \\ 
             & LSTM & 11.60 & 324 & 23.47 & 0.76 & 64k \\ 
             & Antisymmetric & 48.63 & 81 & 5.81 & 0.23 & 16k \\ 
             & ERNN(K=1) &  \textbf{54.50} &  \textbf{58} &  \textbf{2.47} &  \textbf{0.14} &  \textbf{11.5k} \\ 
             \hline
            \end{tabular}
            \label{table:2}
            \vspace{-5mm}
		\end{center}
	\end{minipage}
	\vspace{-10pt}
\end{wraptable}


We compared ERNN with standard RNN, SpectralRNN \cite{2018SpectralRNN},  LSTM \cite{hochreiter1997long}, GRU \cite{cho2014properties}, AntisymmetricRNN \cite{chang2018antisymmetricrnn}, FastRNN and FastGRNN-LSQ (\ie FastGRNN without model compression but achieving better accuracy and lower training time) \cite{kusupati2018nips}. 

Since reducing model size is not our concern, we did not pursue model compression experiments and thus did not compare ERNN with FastGRNN directly, though potentially all the compression techniques in FastGRNN could be applicable to ERNN as well. Since AntisymmetricRNN \cite{chang2018antisymmetricrnn} has similar update rule as FastRNN, we do not report its performance in the table (see also Fig.~\ref{fig:convergence_rate_baseline} showing similar behaviour as FastRNN). 


We used the publicly available implementation \cite{EdgeML2018} for FastRNN and FastGRNN-LSQ to validate results. For others, we cited the corresponding numbers for the other competitors from \cite{kusupati2018nips}. All the experiments were run on an Nvidia GTX 1080 GPU with CUDA 9 and cuDNN 7.0 on a machine with Intel Xeon 2.60 GHz GPU with 20 cores. We found that FastRNN and FastGRNN-LSQ can be trained to perform similar accuracy as reported in \cite{kusupati2018nips} using slightly longer training time on our machine. This indicates that potentially all the other competitors can achieve similar accuracy using longer training time.

\textbf{Hyper-parameters:} We used grid search and fine-grained validation wherever possible to set the hyper-parameters of each algorithm, or according to the settings published in \cite{kusupati2018nips} otherwise (\eg number of hidden states). Both the learning rate and $\eta$'s were initialized to $10^{-2}$. Due to fast covergence in ERNN, we halved the learning rate periodically, where the period was learned based on the validation set. Replicating this on FastRNN or FastGRNN-LSQ does not achieve the maximum accuracy reported in the paper. The batch size of $128$ seems to work well across all the data sets. We used ReLU as the non-linearity and Adam \cite{2015Adam} as the optimizer for all the experiments. We examined the training behavior of ERNN using different $K$'s as illustrated in Fig. \ref{fig:norm_grad}. Since we empirically observed that ERNN achieves equilibrium in less than $5$ steps, we show results for $K=1,3,5$ for all data sets, except large-scale data sets (Pixel-MNIST, Yelp-5) where we are bottle-necked by the GPU memory. 

\textbf{Evaluation Criteria:} The primary focus in this paper is on achieving on-par or better results than state-of-the-art RNNs with much better convergence rate in unstable scenarios, similar to FastRNN. Therefore, following the exact experimental settings in FastRNN and FastGRNN-LSQ, we reported our model sizes (excluding the word-vector embedding storage for PTB and Yelp data set), training time and accuracy (perplexity on the PTB data set), and quoted the numbers from \cite{kusupati2018nips}. We are aware that better numbers on some data sets may exist such as \cite{gong2018frage, cooijmans2016recurrent} with different settings where training instability, is not an issue. Since increasing $K$ to achieve the equilibrium point has inherent computational cost, we also report the prediction time taken for one example as an ablative study.

\begin{wraptable}{r}{.5\linewidth}
	\vspace{-20pt}
	\begin{minipage}[b]{\linewidth}
	    \caption{\footnotesize PTB Language Modeling: 1 Layer. 
	    To be consistent with our other experiments we used a low-dim $\mathbf{U}$; For this size our results did not significantly improve with $K$. This is the dataset of \cite{kusupati2018nips} which uses sequence length $300$ as opposed to 30 in the conventional PTB. Test time, \#Params are tabulated in supplementary}
		\begin{center}\footnotesize
            \setlength{\tabcolsep}{.5pt}
            \begin{tabular}{|c c c c|} 
                \hline
                Algorithm & \begin{tabular}[c]{@{}c@{}}Test Perplexity\end{tabular} & \begin{tabular}[c]{@{}c@{}}Model\\ Size (KB)\end{tabular} & \begin{tabular}[c]{@{}c@{}}Train\\ Time (min)\end{tabular} \\ [0.5ex] 
                \hline\hline
                FastRNN & 127.76 & 513 & 11.20 \\  
                FastGRNN-LSQ & 115.92 & 513 & 12.53 \\ 
                RNN & 144.71 & {\bf 129} & 9.11 \\ 
                SpectralRNN & 130.20 & 242 & - \\ 
                LSTM & 117.41 & 2052 & 13.52 \\ 
                UGRNN & 119.71 & 256 & 11.12 \\ 
                {\bf ERNN(K=1)} & \textbf{115.71} & 288 & {\bf 7.11} \\
                \hline
            \end{tabular}
            \label{table:3}
		\end{center}
	\end{minipage}
\vspace{-13pt}
\end{wraptable}




\textbf{Results:}
Table \ref{table:2} and Table \ref{table:3} compare the performance of ERNN to state-of-the-art RNNs. Four points are worth noticing about ERNN's performance. First, ERNN's prediction gains over a standard RNN ranged from $4.03\%$ on Pixel-MNIST data set to $21.46\%$ on Google-12 data set. Similar observations follow for other RNN variants, demonstrating the superiority of our approach. Second, ERNN's prediction accuracy always surpassed FastRNN's (euler discretized ODE rule) prediction accuracy. This indeed shows the advantage of operating the ODE RNNs on equilibrium manifold. This can also be observed in Fig. \ref{fig:convergence_rate_baseline} where ERNNs converge much faster than AntisymmetricRNN (ODE similar to FastRNN). Third, ERNN surpasses gating based FastGRNN-LSQ on $4$ out of $6$ data set in terms of prediction accuracy with $3.11\%$ on DSA-19 data set and $2.23\%$ on Google-30 data set. Fourth, and most importantly, ERNN's training speedups over FastRNN as well as FastGRNN-LSQ range from $3.5$x on HAR-2 data set to $5.2$x on Pixel-MNIST data set. This emphasizes the fact that operating RNNs on the equilibrium manifold leads to superior accuracy than gating based methods with substantially better training efficiency. Thus while fixed-point recursions lead to additional computation, we more than offset it with faster convergence. Also note that the model size of ERNN is always smaller than the model size of either FastRNN or FastGRNN-LSQ.

\begin{wrapfigure}{r}{.5\linewidth}
	\vspace{-25pt}
	\begin{center}
		\includegraphics[width=\linewidth]{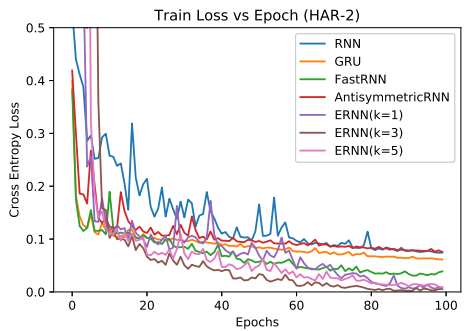}
		\vspace{-5mm}
		\caption{\footnotesize Comparison between baseline RNNs and ERNN on training convergence on HAR-2 dataset.}
		\label{fig:convergence_rate_baseline}
	\end{center}
	\vspace{-15pt}
\end{wrapfigure}

\textbf{Convergence Comparison wrt Baseline:} To further verify the advantage of our approach in terms of rate of convergence, we show the training behavior of different approaches in Fig. \ref{fig:convergence_rate_baseline}. As we see, ERNNs converge significantly faster than the baselines while achieving lower losses. Furthermore, ERNN($K=3$) curve stays below ERNN($K=1$), indicating that finer approximate solutions for equilibrium points lead to faster convergence as well as better generalization (see Table \ref{table:2}). The training time reported in Table \ref{table:2} refers to a convergent model with best accuracy.

\textbf{Convergence Comparison wrt $K$:} To demonstrate the change in rate of convergence in ERNN with higher values of $K$, we plotted training loss against epochs for each $K=1,3,5$ for HAR-2 dataset. Fig. \ref{fig:convergence_rate_baseline} shows higher values decrease training loss faster. From table \ref{table:2}, it can also be verified that this faster convergence rate leads to better generalization. Interestingly the training time required for $K=5$ is much smaller than $K=1$, implying that better approximation to equilibrium leads to faster convergence and better generalization offsetting extra computational cost.

\section{Conclusion}
We developed new recurrent models (ERNNs) that evolve on the Equilibrium manifold of an ordinary differential equation. A key insight is to respond to a new signal input, by updating the state-vector that corresponds to a point on the manifold, rather than taking a direction pointed to by the vector-field. This seemingly simple idea has significant implications leading to better control of state vectors, since the partial derivatives are now evaluated with respect to the feasible manifold directions. As a consequence, we find that for a relatively modest modification of the ODE, these partial derivatives neither vanish nor decay, thus justifying our approach. The proposed solution leads us to novel recurrent rules and smaller model size architectures, which can be efficiently trained. ERNNs achieve state-of-the-art accuracy on many challenging data sets with 3-10x speedups and 1.5-3x model size reduction over the state-of-art, with prediction cost similar to vanilla RNNs.
%
\if0
\todo{On my end, I want to define the Equilibrium manifold; fixed point manifold; that manifold is stable, i.e, if you push a point outside it, it quickly falls back onto the manifold; and finally, develop the idea of slowly varying x leading to low-dimensional matrix.

1. I liked fig. 1 here: https://openreview.net/pdf?id=ryxepo0cFX (btw antisymmetric has a perf of 95 on pmnist -- this is pretty bad. so we should report it somewhere.)
2. Makes sense to show something that happens with K. What would make sense? we need to also make a note that as K increases the training time actually decreases. this is pretty counter-intuitive and the most interesting to me.
3. Showing equilibrium solutions are indeed low-dimensional. We can show this by simply taking our \phi function and then looking locally at what happens when you perturb x slightly. we should see that the tangent vectors lie in a low-dim manifold.
4. stability of equilibrium points. to show this we look at spectrum of the jacobian evaluated at equilibrium.
5. show that indeed experimentally the gradient does not vanish (this is perhaps the most important result).

}
\fi 
\newpage
\bibliographystyle{ieee}
\bibliography{egbib}

\begin{thebibliography}{10}\itemsep=-1pt

\bibitem{Altun2010DSA19}
K.~Altun, B.~Barshan, and O.~Tun\c{c}el.
\newblock Comparative study on classifying human activities with miniature
  inertial and magnetic sensors.
\newblock {\em Pattern Recogn.}, 43(10):3605--3620, Oct. 2010.

\bibitem{Anguita2012HAR}
D.~Anguita, A.~Ghio, L.~Oneto, X.~Parra, and J.~L. Reyes-Ortiz.
\newblock Human activity recognition on smartphones using a multiclass
  hardware-friendly support vector machine.
\newblock In {\em Proceedings of the 4th International Conference on Ambient
  Assisted Living and Home Care}, IWAAL'12, pages 216--223, Berlin, Heidelberg,
  2012. Springer-Verlag.

\bibitem{arjovsky2016unitary}
M.~Arjovsky, A.~Shah, and Y.~Bengio.
\newblock Unitary evolution recurrent neural networks.
\newblock In {\em International Conference on Machine Learning}, pages
  1120--1128, 2016.

\bibitem{Bengio2013AdvancesIO}
Y.~Bengio, N.~Boulanger-Lewandowski, and R.~Pascanu.
\newblock Advances in optimizing recurrent networks.
\newblock {\em 2013 IEEE International Conference on Acoustics, Speech and
  Signal Processing}, pages 8624--8628, 2013.

\bibitem{Bengio:1994:LLD:2325857.2328340}
Y.~Bengio, P.~Simard, and P.~Frasconi.
\newblock Learning long-term dependencies with gradient descent is difficult.
\newblock {\em Trans. Neur. Netw.}, 5(2):157--166, Mar. 1994.

\bibitem{butcher2003numerical}
J.~Butcher.
\newblock {\em Numerical Methods for Ordinary Differential Equations}.
\newblock Wiley, 2003.

\bibitem{campos2017skip}
V.~Campos, B.~Jou, X.~Gir{\'o}-i Nieto, J.~Torres, and S.-F. Chang.
\newblock Skip rnn: Learning to skip state updates in recurrent neural
  networks.
\newblock {\em arXiv preprint arXiv:1708.06834}, 2017.

\bibitem{chang2018antisymmetricrnn}
B.~Chang, M.~Chen, E.~Haber, and E.~H. Chi.
\newblock Antisymmetric{RNN}: A dynamical system view on recurrent neural
  networks.
\newblock In {\em International Conference on Learning Representations}, 2019.

\bibitem{chang2017dilated}
S.~Chang, Y.~Zhang, W.~Han, M.~Yu, X.~Guo, W.~Tan, X.~Cui, M.~Witbrock, M.~A.
  Hasegawa-Johnson, and T.~S. Huang.
\newblock Dilated recurrent neural networks.
\newblock In {\em Advances in Neural Information Processing Systems}, pages
  77--87, 2017.

\bibitem{nde}
T.~Q. Chen, Y.~Rubanova, J.~Bettencourt, and D.~K. Duvenaud.
\newblock Neural ordinary differential equations.
\newblock In {\em Advances in Neural Information Processing Systems 31}. 2018.

\bibitem{cho2014properties}
K.~Cho, B.~Van~Merri{\"e}nboer, D.~Bahdanau, and Y.~Bengio.
\newblock On the properties of neural machine translation: Encoder-decoder
  approaches.
\newblock {\em arXiv preprint arXiv:1409.1259}, 2014.

\bibitem{2016arXivUGRNN}
J.~{Collins}, J.~{Sohl-Dickstein}, and D.~{Sussillo}.
\newblock {Capacity and Trainability in Recurrent Neural Networks}.
\newblock {\em arXiv e-prints}, page arXiv:1611.09913, Nov. 2016.

\bibitem{cooijmans2016recurrent}
T.~Cooijmans, N.~Ballas, C.~Laurent, {\c{C}}.~G{\"u}l{\c{c}}ehre, and
  A.~Courville.
\newblock Recurrent batch normalization.
\newblock {\em arXiv preprint arXiv:1603.09025}, 2016.

\bibitem{Dauphin2017LanguageMW}
Y.~Dauphin, A.~Fan, M.~Auli, and D.~Grangier.
\newblock Language modeling with gated convolutional networks.
\newblock In {\em ICML}, 2017.

\bibitem{dembo1982inexact}
R.~S. Dembo, S.~C. Eisenstat, and T.~Steihaug.
\newblock Inexact newton methods.
\newblock {\em SIAM Journal on Numerical analysis}, 19(2):400--408, 1982.

\bibitem{Gehring2017ConvolutionalST}
J.~Gehring, M.~Auli, D.~Grangier, D.~Yarats, and Y.~Dauphin.
\newblock Convolutional sequence to sequence learning.
\newblock In {\em ICML}, 2017.

\bibitem{gong2018frage}
C.~Gong, D.~He, X.~Tan, T.~Qin, L.~Wang, and T.-Y. Liu.
\newblock Frage: frequency-agnostic word representation.
\newblock In {\em Advances in Neural Information Processing Systems}, pages
  1334--1345, 2018.

\bibitem{trainingDeepRNN}
M.~Hermans and B.~Schrauwen.
\newblock Training and analysing deep recurrent neural networks.
\newblock In C.~J.~C. Burges, L.~Bottou, M.~Welling, Z.~Ghahramani, and K.~Q.
  Weinberger, editors, {\em Advances in Neural Information Processing Systems
  26}, pages 190--198. Curran Associates, Inc., 2013.

\bibitem{hochreiter1997long}
S.~Hochreiter and J.~Schmidhuber.
\newblock Long short-term memory.
\newblock {\em Neural computation}, 9(8):1735--1780, 1997.

\bibitem{Jaeger07}
H.~Jaeger, M.~Lukosevicius, D.~Popovici, and U.~Siewert.
\newblock Optimization and applications of echo state networks with
  leaky-integrator neurons.
\newblock {\em Neural networks : the official journal of the International
  Neural Network Society}, 20:335--52, 05 2007.

\bibitem{jing2017tunable}
L.~Jing, Y.~Shen, T.~Dubcek, J.~Peurifoy, S.~Skirlo, Y.~LeCun, M.~Tegmark, and
  M.~Solja{\v{c}}i{\'c}.
\newblock Tunable efficient unitary neural networks (eunn) and their
  application to rnns.
\newblock In {\em International Conference on Machine Learning}, pages
  1733--1741, 2017.

\bibitem{2015Adam}
D.~P. {Kingma} and J.~{Ba}.
\newblock Adam: A method for stochastic optimization.
\newblock In {\em ICML}, 2015.

\bibitem{kusupati2018nips}
A.~Kusupati, M.~Singh, K.~Bhatia, A.~Kumar, P.~Jain, and M.~Varma.
\newblock Fastgrnn: A fast, accurate, stable and tiny kilobyte sized gated
  recurrent neural network.
\newblock In {\em Advances in Neural Information Processing Systems}, 2018.

\bibitem{KusupatiSBKJV18}
A.~Kusupati, M.~Singh, K.~Bhatia, A.~Kumar, P.~Jain, and M.~Varma.
\newblock Fastgrnn: {A} fast, accurate, stable and tiny kilobyte sized gated
  recurrent neural network.
\newblock In {\em NeurIPS}, pages 9031--9042, 2018.

\bibitem{Lecun98gradient-basedlearning}
Y.~Lecun, L.~Bottou, Y.~Bengio, and P.~Haffner.
\newblock Gradient-based learning applied to document recognition.
\newblock In {\em Proceedings of the IEEE}, pages 2278--2324, 1998.

\bibitem{McAuley2013PTB}
J.~McAuley and J.~Leskovec.
\newblock Hidden factors and hidden topics: Understanding rating dimensions
  with review text.
\newblock In {\em Proceedings of the 7th ACM Conference on Recommender
  Systems}, RecSys '13, pages 165--172, New York, NY, USA, 2013. ACM.

\bibitem{2017oRNN}
Z.~Mhammedi, A.~D. Hellicar, A.~Rahman, and J.~Bailey.
\newblock Efficient orthogonal parametrisation of recurrent neural networks
  using householder reflections.
\newblock {\em CoRR}, abs/1612.00188, 2016.

\bibitem{EdgeML2018}
Microsoft.
\newblock Edge machine learning.
\newblock 2018.

\bibitem{miller2018recurrent}
J.~Miller and M.~Hardt.
\newblock When recurrent models don't need to be recurrent.
\newblock {\em arXiv preprint arXiv:1805.10369}, 2018.

\bibitem{mujika2017fast}
A.~Mujika, F.~Meier, and A.~Steger.
\newblock Fast-slow recurrent neural networks.
\newblock In {\em Advances in Neural Information Processing Systems}, pages
  5915--5924, 2017.

\bibitem{niu2019recurrent}
M.~Y. Niu, L.~Horesh, and I.~Chuang.
\newblock Recurrent neural networks in the eye of differential equations.
\newblock {\em arXiv preprint arXiv:1904.12933}, 2019.

\bibitem{pascanu2013construct}
R.~Pascanu, C.~Gulcehre, K.~Cho, and Y.~Bengio.
\newblock How to construct deep recurrent neural networks.
\newblock {\em arXiv preprint arXiv:1312.6026}, 2013.

\bibitem{pascanu2013difficulty}
R.~Pascanu, T.~Mikolov, and Y.~Bengio.
\newblock On the difficulty of training recurrent neural networks.
\newblock In {\em International Conference on Machine Learning}, pages
  1310--1318, 2013.

\bibitem{deeptransitionRNN}
R.~Pascanu, Çaglar G{\"u}lçehre, K.~Cho, and Y.~Bengio.
\newblock How to construct deep recurrent neural networks.
\newblock {\em CoRR}, abs/1312.6026, 2013.

\bibitem{spivak1970comprehensive}
M.~Spivak.
\newblock {\em A comprehensive introduction to differential geometry}.
\newblock Number v. 1 in A Comprehensive Introduction to Differential Geometry.
  Brandeis University, 1970.

\bibitem{talathi2015improving}
S.~S. Talathi and A.~Vartak.
\newblock Improving performance of recurrent neural network with relu
  nonlinearity.
\newblock {\em arXiv preprint arXiv:1511.03771}, 2015.

\bibitem{tallec2018can}
C.~Tallec and Y.~Ollivier.
\newblock Can recurrent neural networks warp time?
\newblock In {\em International Conference on Learning Representations}, 2018.

\bibitem{Oord2016WaveNetAG}
A.~van~den Oord, S.~Dieleman, H.~Zen, K.~Simonyan, O.~Vinyals, A.~Graves,
  N.~Kalchbrenner, A.~W. Senior, and K.~Kavukcuoglu.
\newblock Wavenet: A generative model for raw audio.
\newblock In {\em SSW}, 2016.

\bibitem{Vaswani2017AttentionIA}
A.~Vaswani, N.~Shazeer, N.~Parmar, J.~Uszkoreit, L.~Jones, A.~N. Gomez,
  L.~Kaiser, and I.~Polosukhin.
\newblock Attention is all you need.
\newblock In {\em NIPS}, 2017.

\bibitem{vidyasagar}
M.~Vidyasagar.
\newblock {\em Nonlinear Systems Analysis}.
\newblock Prentice-Hall Networks Series. Prentice-Hall, 1978.

\bibitem{warden2018google30}
P.~{Warden}.
\newblock {Speech Commands: A Dataset for Limited-Vocabulary Speech
  Recognition}.
\newblock {\em arXiv e-prints}, page arXiv:1804.03209, Apr. 2018.

\bibitem{Yelp2017}
I.~Yelp.
\newblock Yelp dataset challenge.
\newblock 2017.

\bibitem{2018SpectralRNN}
J.~{Zhang}, Q.~{Lei}, and I.~S. {Dhillon}.
\newblock Stabilizing gradients for deep neural networks via efficient svd
  parameterization.
\newblock In {\em ICML}, 2018.

\bibitem{zhang2018stabilizing}
J.~Zhang, Q.~Lei, and I.~S. Dhillon.
\newblock Stabilizing gradients for deep neural networks via efficient {SVD}
  parameterization.
\newblock {\em arXiv preprint arXiv:1803.09327}, 2018.

\bibitem{zilly2017recurrent}
J.~G. Zilly, R.~K. Srivastava, J.~Koutn{\'\i}k, and J.~Schmidhuber.
\newblock Recurrent highway networks.
\newblock In {\em ICML}, pages 4189--4198. JMLR. org, 2017.

\end{thebibliography}
	
\newpage
{\Huge \textbf{Supplementary}}

\section{Local Convergence with Linear Rate}
Recall that we rewrite the fixed-point constraints in our ERNNs as the following ODE:
\begin{align}\label{eqn:ode}
\mathbf{h}'_k(t) = F(\mathbf{h}_k(t)) \stackrel{def}{=} f(\mathbf{h}_k(t) + \mathbf{h}_{k-1}, \mathbf{x}_k; \alpha) - \gamma(\mathbf{h}_k(t) + \mathbf{h}_{k-1}); \,\,\, \h_k(0)=\mathbf{0}.
\end{align}
Then based on the Euler method, we have the following update rule for solving fixed-points:
\begin{align}\label{eqn:update_rule}
    \hspace{-2mm}\mathbf{h}_{k}^{(i+1)}  = &\mathbf{h}_{k}^{(i)} + \eta_k^{(i)}F(\mathbf{h}_k^{(i)})\\ &= \mathbf{h}_{k}^{(i)} + \eta_k^{(i)}[f(\mathbf{h}_k^{(i)} + \mathbf{h}_{k-1}, \mathbf{x}_k; \alpha) - (\mathbf{h}_k^{(i)} + \mathbf{h}_{k-1})].
\end{align}

{\em Inexact Newton methods} \cite{dembo1982inexact} refer to a family of algorithms that aim to solve the equation system $F(\mathbf{z})=\mathbf{0}$ approximately at each iteration using the following rule:
\begin{align}\label{eqn:inexact}
    \mathbf{z}^{(i+1)} = \mathbf{z}^{(i)} + \mathbf{s}^{(i)}, \, \mathbf{r}^{(i)} = F(\mathbf{z}^{(i)}) + \nabla F(\mathbf{z}^{(i)})\mathbf{s}^{(i)},     
\end{align}
where $\nabla F$ denotes the (sub)gradient of function $F$, and $\mathbf{r}^{(i)}$ denotes the error at the $i$-th iteration between $F(\mathbf{z}^{(i)})$ and $\mathbf{0}$.

By drawing the connection between Eq. \ref{eqn:update_rule} and Eq. \ref{eqn:inexact}, we can set $\mathbf{z}^{(i)}\equiv\mathbf{h}_{k}^{(i)}$ and  $\mathbf{s}^{(i)}\equiv\eta_{k}^{(i)}F(\mathbf{h}_{k}^{(i)})$. Then based on Eq. \ref{eqn:inexact} we have 
\begin{align}\label{eqn:r}
\mathbf{r}^{(i)} = F(\mathbf{h}_{k}^{(i)}) + \eta_{k}^{(i)}\nabla F(\mathbf{h}_{k}^{(i)})F(\mathbf{h}_{k}^{(i)}).
\end{align}

\begin{lemma}[Thm. 2.3 in \cite{dembo1982inexact}]\label{lem:1}
Assume that 
\begin{align}\label{eqn:tau}
    \frac{\|\mathbf{r}^{(i)}\|}{\|F(\mathbf{z}^{(i)})\|}\leq\tau<1, \forall k,
\end{align}
where $\|\cdot\|$ denotes an arbitrary norm and the induced operator norm. There exists $\varepsilon>0$ such that, if $\|\mathbf{z}^{(0)}-\mathbf{z}^*\|\leq\varepsilon$, then the sequence of inexact Newton iterates $\{\mathbf{z}^{(i)}\}$ converges to $\mathbf{z}^*$. Moreover, the convergence is linear in the sense that $\|\mathbf{z}^{(i+1)}-\mathbf{z}^*\|_*\leq \tau\|\mathbf{z}^{(i)}-\mathbf{z}^*\|_*$,
where $\|\mathbf{y}\|_*=\|\nabla F(\mathbf{z}^*)\mathbf{y}\|$.
\end{lemma}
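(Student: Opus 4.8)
The plan is to reproduce the standard local convergence argument for inexact Newton iterations, the essential point being that the error is measured in the norm $\|\mathbf{y}\|_* = \|\nabla F(\mathbf{z}^*)\mathbf{y}\|$ induced by the Jacobian at the root. Throughout I assume, as is implicit for $\|\cdot\|_*$ to define a genuine norm, that $F$ is continuously differentiable in a neighborhood of $\mathbf{z}^*$, that $F(\mathbf{z}^*)=\mathbf{0}$, and that $\nabla F(\mathbf{z}^*)$ is nonsingular. Fix any $t$ with $\tau<t<1$; I will construct an $\varepsilon>0$ for which the per-step contraction factor is $t$. Since $t$ may be taken arbitrarily close to $\tau$ (at the cost of shrinking $\varepsilon$), this delivers the asserted linear rate with asymptotic factor $\tau$.

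First I would derive the error recursion. Setting $\mathbf{e}^{(i)}=\mathbf{z}^{(i)}-\mathbf{z}^*$ and solving the defining relation $\mathbf{r}^{(i)}=F(\mathbf{z}^{(i)})+\nabla F(\mathbf{z}^{(i)})\mathbf{s}^{(i)}$ for the step $\mathbf{s}^{(i)}=\mathbf{z}^{(i+1)}-\mathbf{z}^{(i)}$ (the Jacobian $\nabla F(\mathbf{z}^{(i)})$ is invertible near $\mathbf{z}^*$ by continuity), one obtains
\begin{align}
\mathbf{e}^{(i+1)} = \mathbf{e}^{(i)} + [\nabla F(\mathbf{z}^{(i)})]^{-1}\big(\mathbf{r}^{(i)} - F(\mathbf{z}^{(i)})\big).
\end{align}
Next I linearize: differentiability gives $F(\mathbf{z}^{(i)}) = \nabla F(\mathbf{z}^*)\mathbf{e}^{(i)} + \mathbf{d}^{(i)}$ with $\|\mathbf{d}^{(i)}\| = o(\|\mathbf{e}^{(i)}\|)$, and continuity of $\nabla F$ yields $\nabla F(\mathbf{z}^*)[\nabla F(\mathbf{z}^{(i)})]^{-1} = \mathbf{I} + \mathbf{E}^{(i)}$ with $\|\mathbf{E}^{(i)}\|\to 0$ as $\mathbf{e}^{(i)}\to\mathbf{0}$. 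Applying $\nabla F(\mathbf{z}^*)$ to the recursion, the leading terms cancel because $\nabla F(\mathbf{z}^*)\mathbf{e}^{(i)} - F(\mathbf{z}^{(i)}) = -\mathbf{d}^{(i)}$, leaving
\begin{align}
\nabla F(\mathbf{z}^*)\mathbf{e}^{(i+1)} = \mathbf{r}^{(i)} - \mathbf{d}^{(i)} + \mathbf{E}^{(i)}\big(\mathbf{r}^{(i)} - F(\mathbf{z}^{(i)})\big).
\end{align}

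The crux is then the $\|\cdot\|_*$-estimate. Taking norms and using the hypothesis $\|\mathbf{r}^{(i)}\|\leq\tau\|F(\mathbf{z}^{(i)})\|$ together with $\|F(\mathbf{z}^{(i)})\|\leq\|\mathbf{e}^{(i)}\|_* + \|\mathbf{d}^{(i)}\|$, I obtain a bound of the form $\|\mathbf{e}^{(i+1)}\|_* \leq \big(\tau + (1+\tau)\|\mathbf{E}^{(i)}\|\big)\|\mathbf{e}^{(i)}\|_* + o(\|\mathbf{e}^{(i)}\|_*)$, where I have used that $\|\cdot\|$ and $\|\cdot\|_*$ are equivalent (as $\nabla F(\mathbf{z}^*)$ is nonsingular) to convert the $o(\|\mathbf{e}^{(i)}\|)$ terms. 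Choosing $\varepsilon$ small enough that, for $\|\mathbf{e}^{(i)}\|\leq\varepsilon$, the Jacobian-mismatch term $\|\mathbf{E}^{(i)}\|$ and the relative higher-order term are together smaller than $t-\tau$, gives the one-step contraction $\|\mathbf{e}^{(i+1)}\|_*\leq t\|\mathbf{e}^{(i)}\|_*$. A routine induction then shows the iterates never leave the neighborhood (the $\|\cdot\|_*$-norm decreases monotonically, and by norm equivalence one shrinks $\varepsilon$ so the $\|\cdot\|$-ball is also preserved), whence $\|\mathbf{e}^{(i)}\|_*\leq t^i\|\mathbf{e}^{(0)}\|_*\to 0$ and $\mathbf{z}^{(i)}\to\mathbf{z}^*$ at the claimed linear rate.

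I expect the main obstacle to be absorbing the two coupled perturbations simultaneously — the nonlinear remainder $\mathbf{d}^{(i)}$ and the Jacobian mismatch $\mathbf{E}^{(i)}$ — into the gap $t-\tau$ while guaranteeing the iterates stay in the region where these local estimates are valid. The device that makes this clean is precisely the choice of norm $\|\cdot\|_*$: it is, up to scaling, the unique norm in which the exact-Newton-preconditioned update reduces to the residual, so the residual tolerance $\tau$ passes directly into the contraction factor. In any fixed norm one would instead incur the condition number of $\nabla F(\mathbf{z}^*)$ and fail to recover the factor $\tau$, which is exactly why the theorem is stated in this weighted norm.
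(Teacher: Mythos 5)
The paper offers no proof of this lemma: it is imported verbatim (as the bracketed attribution indicates) from Theorem 2.3 of Dembo--Eisenstat--Steihaug's \emph{Inexact Newton Methods}, so there is no in-paper argument to compare against. Your reconstruction is correct and is essentially the original DES proof: the error recursion $\mathbf{e}^{(i+1)} = \mathbf{e}^{(i)} + [\nabla F(\mathbf{z}^{(i)})]^{-1}(\mathbf{r}^{(i)} - F(\mathbf{z}^{(i)}))$, preconditioning by $\nabla F(\mathbf{z}^*)$ so that the first-order terms cancel against $F(\mathbf{z}^{(i)}) = \nabla F(\mathbf{z}^*)\mathbf{e}^{(i)} + \mathbf{d}^{(i)}$, and absorption of the two perturbations $\mathbf{d}^{(i)}$ and $\mathbf{E}^{(i)}$ into the slack of a contraction factor $t \in (\tau, 1)$, with the induction kept inside the validity ball via equivalence of $\|\cdot\|$ and $\|\cdot\|_*$. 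Two points in your write-up deserve credit: you make explicit the hypotheses the paper's quotation silently drops ($F$ continuously differentiable near $\mathbf{z}^*$, $F(\mathbf{z}^*)=\mathbf{0}$, $\nabla F(\mathbf{z}^*)$ nonsingular), and you correctly note that the argument yields the per-step factor $t$ strictly between $\tau$ and $1$ rather than $\tau$ itself --- the original theorem is in fact stated with such a $t$, so the version quoted in this paper (contraction with factor exactly $\tau$) is a slight overstatement that your ``take $t$ arbitrarily close to $\tau$ at the cost of shrinking $\varepsilon$'' remark handles honestly.
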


\begin{thm}[Local Convergence with Linear Rate]\label{thm:convergence}
Assume that the function $F$ in Eq. \ref{eqn:ode} and the parameter $\eta_{k}^{(i)}$ in Eq. \ref{eqn:update_rule} satisfy 
\begin{align}\label{eqn:euler}
    [\eta_{k}^{(i)}]^2 \|\nabla F(\mathbf{h}_{k}^{(i)})F(\mathbf{h}_{k}^{(i)})\|^2 + 2\eta_{k}^{(i)}F(\mathbf{h}_{k}^{(i)})^T\nabla F(\mathbf{h}_{k}^{(i)})F(\mathbf{h}_{k}^{(i)}) < 0, \forall i, \forall k.
\end{align}
Then there exists $\epsilon>0$ such that if $\|\mathbf{h}_{k}^{(0)} - \mathbf{h}_{k}\|\leq\epsilon$ where $\mathbf{h}_{k}$ denotes the fixed point, the sequence $\{\mathbf{h}_k^{(i)}\}$ generated by the Euler method converges to the equilibrium solution in ${\cal M}(\mathbf{h}_{k-1},\mathbf{x}_k)$ locally with linear rate.
\end{thm}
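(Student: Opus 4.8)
The plan is to realize the Euler update \eqref{eqn:update_rule} as a special instance of the inexact Newton iteration \eqref{eqn:inexact} and then invoke Lemma \ref{lem:1}. First I would make the identification $\mathbf{z}^{(i)} \equiv \mathbf{h}_{k}^{(i)}$ and $\mathbf{s}^{(i)} \equiv \eta_{k}^{(i)} F(\mathbf{h}_{k}^{(i)})$, so that the inexact Newton residual collapses to exactly \eqref{eqn:r}, namely $\mathbf{r}^{(i)} = F(\mathbf{h}_{k}^{(i)}) + \eta_{k}^{(i)} \nabla F(\mathbf{h}_{k}^{(i)}) F(\mathbf{h}_{k}^{(i)})$. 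Since the equilibrium manifold $\mathcal{M}(\mathbf{h}_{k-1},\mathbf{x}_k)$ is precisely the zero set of $F$, the fixed point $\mathbf{h}_{k}$ of the recursion satisfies $F(\mathbf{h}_{k}) = \mathbf{0}$ and therefore plays the role of the root $\mathbf{z}^*$ in the lemma; this lines up the hypotheses and makes the identification legitimate.

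The crux is to verify the contraction condition \eqref{eqn:tau}, i.e. $\|\mathbf{r}^{(i)}\| / \|F(\mathbf{h}_{k}^{(i)})\| \le \tau < 1$, in the Euclidean norm. Writing $F$ and $\nabla F$ for $F(\mathbf{h}_{k}^{(i)})$ and $\nabla F(\mathbf{h}_{k}^{(i)})$, I would expand $\|\mathbf{r}^{(i)}\|^2 = \|F + \eta_{k}^{(i)}\,\nabla F\, F\|^2$ and subtract $\|F\|^2$; the difference is exactly the left-hand side of \eqref{eqn:euler}, that is $[\eta_{k}^{(i)}]^2 \|\nabla F\,F\|^2 + 2\eta_{k}^{(i)} F^{T}\nabla F\,F$. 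The hypothesis forces this quantity to be strictly negative, so $\|\mathbf{r}^{(i)}\| < \|F(\mathbf{h}_{k}^{(i)})\|$ and the forcing-ratio lies below $1$ at every step. With the ratio condition secured, Lemma \ref{lem:1} directly produces an $\varepsilon>0$ and linear convergence of $\{\mathbf{h}_{k}^{(i)}\}$ to $\mathbf{h}_{k}$, the equilibrium solution, which is the desired conclusion.

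The main obstacle is passing from the pointwise strict inequality to the single uniform constant $\tau<1$ that Lemma \ref{lem:1} actually requires: assumption \eqref{eqn:euler} only guarantees negativity of the expression at each pair $(i,k)$, not that the induced ratios are bounded away from $1$. I would close this gap by working inside a compact neighborhood of $\mathbf{h}_{k}$ on which $F$ and $\nabla F$ are continuous (guaranteed by the smoothness assumptions on $f$), so that the continuous map $\mathbf{h} \mapsto \|\mathbf{r}(\mathbf{h})\|/\|F(\mathbf{h})\|$ attains a maximum value strictly below $1$ on a suitably shrunk neighborhood; restricting the initial radius $\epsilon$ to that neighborhood then furnishes a uniform $\tau$. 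A secondary technical point to handle with care is the choice of norm: condition \eqref{eqn:euler} is written in terms of the Euclidean inner product, so I would carry the entire argument in the $2$-norm and observe that Lemma \ref{lem:1} delivers contraction in the equivalent norm $\|\mathbf{y}\|_* = \|\nabla F(\mathbf{z}^*)\mathbf{y}\|$, whence local linear convergence in the Euclidean norm follows by norm equivalence near the nonsingular Jacobian $\nabla F(\mathbf{z}^*)$ (nonsingularity being inherited from the assumptions of Lemma \ref{lem:IdTM}).
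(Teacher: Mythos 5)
Your proposal follows essentially the same route as the paper's own proof: identify the Euler step with an inexact Newton iteration via $\mathbf{z}^{(i)}\equiv\mathbf{h}_k^{(i)}$, $\mathbf{s}^{(i)}\equiv\eta_k^{(i)}F(\mathbf{h}_k^{(i)})$, observe that squaring $\|\mathbf{r}^{(i)}\|<\|F(\mathbf{h}_k^{(i)})\|$ yields exactly the hypothesis \eqref{eqn:euler}, and conclude by Thm.~2.3 of \cite{dembo1982inexact}. Your additional compactness argument for extracting a uniform $\tau<1$ from the pointwise strict inequalities addresses a step the paper's proof silently elides, so if anything your version is the more complete one.
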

\begin{proof}
By substituting Eq. \ref{eqn:r} into Eq. \ref{eqn:tau}, to prove local convergence we need to guarantee 
\begin{align}\label{eqn:6}
    \|F(\mathbf{h}_{k}^{(i)}) + \eta_{k}^{(i)}\nabla F(\mathbf{h}_{k}^{(i)})F(\mathbf{h}_{k}^{(i)})\| < \|F(\mathbf{h}_{k}^{(i)})\|.
\end{align}
By taking the square of both sides in Eq. \ref{eqn:6}, we can show that Eq. \ref{eqn:6} is equivalent to Eq. \ref{eqn:euler}. We then complete our proof.

\end{proof}

\begin{cor}
Assume that $\|\mathbf{I}+\eta_{k}^{(i)}\nabla F(\mathbf{h}_{k}^{(i)})\|<1, \forall i, \forall k$ holds. Then the forward propagation using Eq. \ref{eqn:update_rule} is stable and our sequence $\{\mathbf{h}_{k}^{(i)}\}$ converges locally with linear rate.
\end{cor}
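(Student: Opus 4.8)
The plan is to reduce the claim to the two results already established, namely Lemma~\ref{lem:1} (Thm.~2.3 of \cite{dembo1982inexact}) together with Theorem~\ref{thm:convergence}, and to handle the stability assertion through the spectral interpretation of the hypothesis as in Prop.~2 of \cite{chang2018antisymmetricrnn}. The starting observation is that the forward propagation in Eq.~\ref{eqn:update_rule} is the fixed-point iteration of the map $\mathbf{G}(\mathbf{h}) = \mathbf{h} + \eta_{k}^{(i)}F(\mathbf{h})$, whose Jacobian is exactly $\mathbf{I} + \eta_{k}^{(i)}\nabla F(\mathbf{h}_{k}^{(i)})$. Hence the hypothesis $\|\mathbf{I}+\eta_{k}^{(i)}\nabla F(\mathbf{h}_{k}^{(i)})\|<1$ is precisely the statement that the linearized update is a strict contraction at each iterate.

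For the linear-convergence part, first I would connect the hypothesis to the residual ratio in Eq.~\ref{eqn:tau}. Recalling from Eq.~\ref{eqn:r} that $\mathbf{r}^{(i)} = F(\mathbf{h}_{k}^{(i)}) + \eta_{k}^{(i)}\nabla F(\mathbf{h}_{k}^{(i)})F(\mathbf{h}_{k}^{(i)}) = (\mathbf{I}+\eta_{k}^{(i)}\nabla F(\mathbf{h}_{k}^{(i)}))F(\mathbf{h}_{k}^{(i)})$, submultiplicativity of the induced operator norm gives $\|\mathbf{r}^{(i)}\| \leq \|\mathbf{I}+\eta_{k}^{(i)}\nabla F(\mathbf{h}_{k}^{(i)})\|\,\|F(\mathbf{h}_{k}^{(i)})\|$, so that the hypothesis yields $\|\mathbf{r}^{(i)}\|/\|F(\mathbf{h}_{k}^{(i)})\| \leq \|\mathbf{I}+\eta_{k}^{(i)}\nabla F(\mathbf{h}_{k}^{(i)})\| < 1$. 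Setting $\tau = \sup_{i,k}\|\mathbf{I}+\eta_{k}^{(i)}\nabla F(\mathbf{h}_{k}^{(i)})\|$ verifies Eq.~\ref{eqn:tau}, and Lemma~\ref{lem:1} then delivers local convergence with linear rate. Equivalently, expanding $\|(\mathbf{I}+\eta_{k}^{(i)}\nabla F)F\|^2 < \|F\|^2$ recovers exactly the hypothesis Eq.~\ref{eqn:euler} of Theorem~\ref{thm:convergence}, so the corollary also follows by direct appeal to that theorem; I would present this second route as a one-line check, since it makes transparent that the stated assumption is strictly stronger than what Theorem~\ref{thm:convergence} requires.

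For the stability part, I would invoke the eigenvalue characterization of forward-Euler stability from Prop.~2 of \cite{chang2018antisymmetricrnn}. Since the spectral radius is dominated by any induced operator norm, the hypothesis gives $\rho(\mathbf{I}+\eta_{k}^{(i)}\nabla F(\mathbf{h}_{k}^{(i)})) \leq \|\mathbf{I}+\eta_{k}^{(i)}\nabla F(\mathbf{h}_{k}^{(i)})\| < 1$; writing $\lambda$ for an eigenvalue of $\nabla F$ this reads $|1+\eta_{k}^{(i)}\lambda|<1$, which is exactly the discrete stability region in which Prop.~2 requires the eigenvalues of the amplification matrix to lie. Consequently perturbations of the hidden state are not amplified along the iteration, i.e.\ the forward propagation is stable.

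The main obstacle I anticipate is not the algebra but the uniformity and the precise meaning of ``stable.'' To apply Lemma~\ref{lem:1} one needs a single $\tau<1$ valid over the whole trajectory and over a neighborhood of the fixed point, whereas the hypothesis is a pointwise bound at the iterates; I would close this gap by a continuity/compactness argument (the operator norm is continuous in $\mathbf{h}$, so the strict bound persists on a small ball around the equilibrium, and the supremum over the finitely many steps stays below one). I would also be careful to state that ``stability'' here means the numerical non-amplification of the discretized flow in the sense of \cite{chang2018antisymmetricrnn}, rather than Lyapunov stability of the underlying ODE, since the two notions coincide only under the contraction hypothesis we have assumed.
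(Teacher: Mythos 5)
Your proposal is correct and follows essentially the same route as the paper's proof: substitute the residual $\mathbf{r}^{(i)}=(\mathbf{I}+\eta_{k}^{(i)}\nabla F(\mathbf{h}_{k}^{(i)}))F(\mathbf{h}_{k}^{(i)})$ into the ratio of Eq.~\ref{eqn:tau}, bound it by submultiplicativity using the assumed operator-norm condition, and conclude via Thm.~2.3 of \cite{dembo1982inexact} together with Prop.~2 of \cite{chang2018antisymmetricrnn} for the stability claim. Your added remarks on obtaining a uniform $\tau<1$ and on the precise sense of ``stability'' are more careful than the paper's one-line argument, but they do not change the underlying approach.
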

\begin{proof}
By substituting Eq. \ref{eqn:r} into Eq. \ref{eqn:tau} and based on the assumption in the corollary, we have
\begin{align}
    \frac{\|\mathbf{r}^{(i)}\|}{\|F(\mathbf{h}_{k}^{(i)})\|} & = \frac{\|F(\mathbf{h}_{k}^{(i)}) + \eta_{k}^{(i)}\nabla F(\mathbf{h}_{k}^{(i)})F(\mathbf{h}_{k}^{(i)})\|}{\|F(\mathbf{h}_{k}^{(i)})\|} \nonumber \\ & \leq\frac{\|\mathbf{I}+\eta_{k}^{(i)}\nabla F(\mathbf{h}_{k}^{(i)})\|\|F(\mathbf{h}_{k}^{(i)})\|}{\|F(\mathbf{h}_{k}^{(i)})\|} < 1.
\end{align}
Further based on Prop. 2 in \cite{chang2018antisymmetricrnn} and Thm. \ref{thm:convergence}, we then complete our proof.
\end{proof}

\section{Theoretical Verification}
In this section, we include some experiments to show that our theoretical assumptions hold true.

\begin{figure}[h!]
  \includegraphics[width=\linewidth]{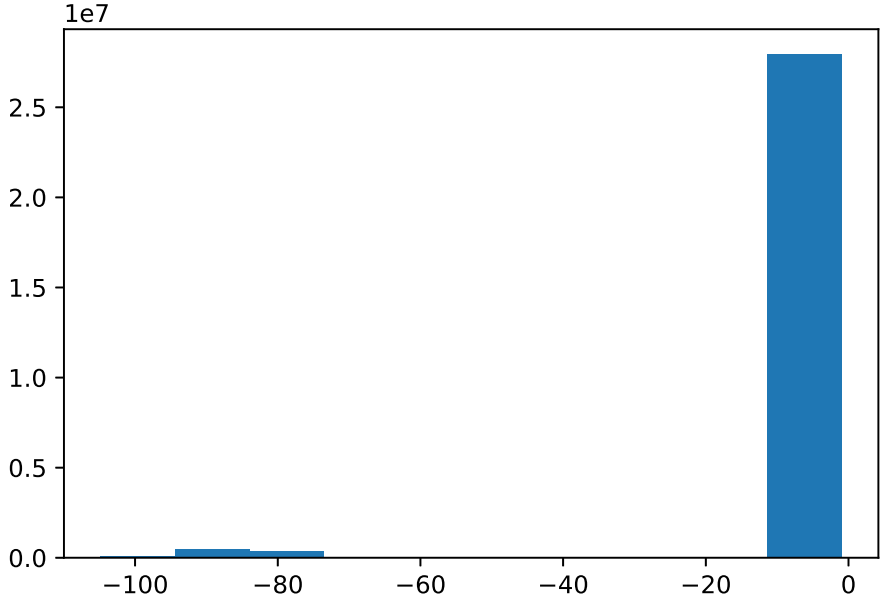}
  \caption{\footnotesize Histogram of the eigenvalues of $\nabla \phi \mathbf{U} - \mathbf{I}$ for ERNN on HAR-2 dataset.}
 		\label{fig:eigen_values_histogram}
\end{figure}

\subsection{Non-Singularity of the matrix \textbf{D}} For our ERNN parametrization to satisfy the conditions of having equillibrium points to be locally asymptotically stable, the eigen values of the matrix $D=(\nabla \phi(\cdot) U - \gamma I)$ should be negative. We plot a histogram of the eigenvalues of $D$ for all the points in the HAR-2 dataset. As illustrated in the figure \ref{fig:eigen_values_histogram}, all the eigenvalues are negative.


\subsection{Different Activation Function} We also performed some experiments for sigmoid activation on HAR-2 dataset. The results for this variant also follow similar pattern as we saw in ReLU variant.

\begin{table}[h!]
\centering
\begin{tabular}{|ccccccc|} 
 \hline
 Data set & Algorithm & \begin{tabular}[c]{@{}c@{}}Accuracy\\ (\%)\end{tabular} & \begin{tabular}[c]{@{}c@{}}Model\\ Size (KB)\end{tabular} & \begin{tabular}[c]{@{}c@{}}Train\\ Time (hr)\end{tabular} & \begin{tabular}[c]{@{}c@{}}Activation\end{tabular} & \begin{tabular}[c]{@{}c@{}}\#Params\end{tabular}  \\ [0.5ex] 
 \hline\hline
 HAR-2 & ERNN(K=1) & 95.32 & \textbf{17} & 0.061 & ReLU & \textbf{4k} \\
      & ERNN(K=3) & 95.52 & \textbf{17} & 0.081 & ReLU & \textbf{4k} \\
      & {\bf ERNN(K=5)} & \textbf{96.30} & 18 & \textbf{0.018} & ReLU & \textbf{4k} \\
      & ERNN(K=1) & 92.16 & \textbf{17} & 0.065 & Sigmoid & \textbf{4k} \\
      & ERNN(K=3) & 93.35 & \textbf{17} & 0.078 & Sigmoid & \textbf{4k} \\
      & {\bf ERNN(K=5)} & 95.30 & 18 & 0.020 & Sigmoid & \textbf{4k} \\
  \hline
\end{tabular}
\vspace{3mm}
\caption{HAR-2 dataset (Sigmoid, ReLU activations): $K$ denotes pre-defined recursions embedded in graph to reach equillibrium.}
\label{table:diff_activation}
\end{table}


\begin{wrapfigure}{r}{.4\linewidth}
	\vspace{-40pt}
	\begin{center}
		\includegraphics[width=\linewidth]{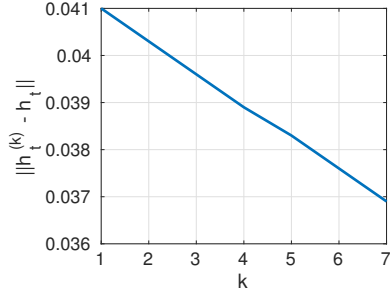}
		\vspace{-7mm}
		\caption{\footnotesize Linear convergence in ERNN.}
		\label{fig:convergence}
	\end{center}
	\vspace{-30pt}
\end{wrapfigure}

\subsection{Linear Rate of Convergence to Fixed Point}
Empirically we verify the local convergence to a fixed point with linear rate by comparing the Euclidean distance between the approximate solutions, $\mathbf{h}_t^{(k)}$, using Eq.~\ref{eqn:update_rule} and the fixed points, $\mathbf{h}_t$, computed using {\sc fsolve} from {\sc scipy}. The learnable parameters are initialized suitably and then fixed. We illustrate our results in Fig. \ref{fig:convergence}, which clearly demonstrates that the approximate solutions tend to converge with linear rate.

\begin{align}\label{eqn:update_rule}
    \hspace{-2mm}\mathbf{h}_{k}^{(i+1)} & = \mathbf{h}_{k}^{(i)} + \eta_k^{(i)}F(\mathbf{h}_k^{(i)}) = \mathbf{h}_{k}^{(i)} + \eta_k^{(i)}[f(\mathbf{h}_k^{(i)} + \mathbf{h}_{k-1}, \mathbf{x}_k; \alpha) - (\mathbf{h}_k^{(i)} + \mathbf{h}_{k-1})],
\end{align}

\subsection{PTB Language Modelling}
Table \ref{table:2} reports all the evaluation metrics for the PTB Language modelling task with $1$ layer as setup by \cite{kusupati2018nips}, including test time and number of parameters (which we omitted from the main paper due to lack of space).

\begin{table}[h!]
\centering

\begin{tabular}{|c c c c c c|} 
                \hline
                Algorithm & \begin{tabular}[c]{@{}c@{}}Test Perplexity\end{tabular} & \begin{tabular}[c]{@{}c@{}}Model\\ Size (KB)\end{tabular} & \begin{tabular}[c]{@{}c@{}}Train\\ Time (min)\end{tabular} & \begin{tabular}[c]{@{}c@{}}Test\\ Time (ms)\end{tabular} & \begin{tabular}[c]{@{}c@{}}\#Params\end{tabular}\\ [0.5ex] 
                \hline\hline
                FastRNN & 127.76 & 513 & 11.20 & 1.2 & 52.5k \\  
                FastGRNN-LSQ & 115.92 & 513 & 12.53  & 1.5 & 52.5k\\ 
                RNN & 144.71 & {\bf 129} & 9.11  & {\bf 0.3} & {\bf 13.2k} \\ 
                SpectralRNN & 130.20 & 242 & -  & 0.6 & 24.8k \\ 
                LSTM & 117.41 & 2052 & 13.52  & 4.8 & 210k \\ 
                UGRNN & 119.71 & 256 & 11.12  & 0.6 & 26.3k \\ 
                {\bf ERNN(K=1)} & \textbf{115.71} & 288 & {\bf 7.11}  & 0.6 & 29.5k \\
                \hline
            \end{tabular}

\vspace{3mm}
\caption{PTB Language Modeling: 1 Layer. 
	    To be consistent with our other experiments we used a low-dim $\mathbf{U}$; For this size our results did not significantly improve with $K$. This is the dataset of \cite{kusupati2018nips} which uses sequence length $300$ as opposed to 30 in the conventional PTB. }
\label{table:ptb_full_table}
\end{table}

\end{document}